\newcommand{\argmax}{\mathop{\rm argmax}\limits}
\newcommand{\argmin}{\mathop{\rm argmin}\limits}
\newtheorem{theorem}{Theorem}
\newcommand{\bhline}[1]{\noalign{\hrule height #1}}
\begin{document}

\title{Label-Noise Robust Generative Adversarial Networks \\ \vspace{-2mm}}

\author{
  Takuhiro Kaneko$^1$
  \quad Yoshitaka Ushiku$^1$
  \quad Tatsuya Harada$^{1,2}$
  \vspace{3mm}\\
  $^1$The University of Tokyo
  \quad $^2$RIKEN
  \vspace{-1mm}
}

\twocolumn[{
  \renewcommand\twocolumn[1][]{#1}
  \maketitle
  \begin{center}
    \includegraphics[width=\textwidth]{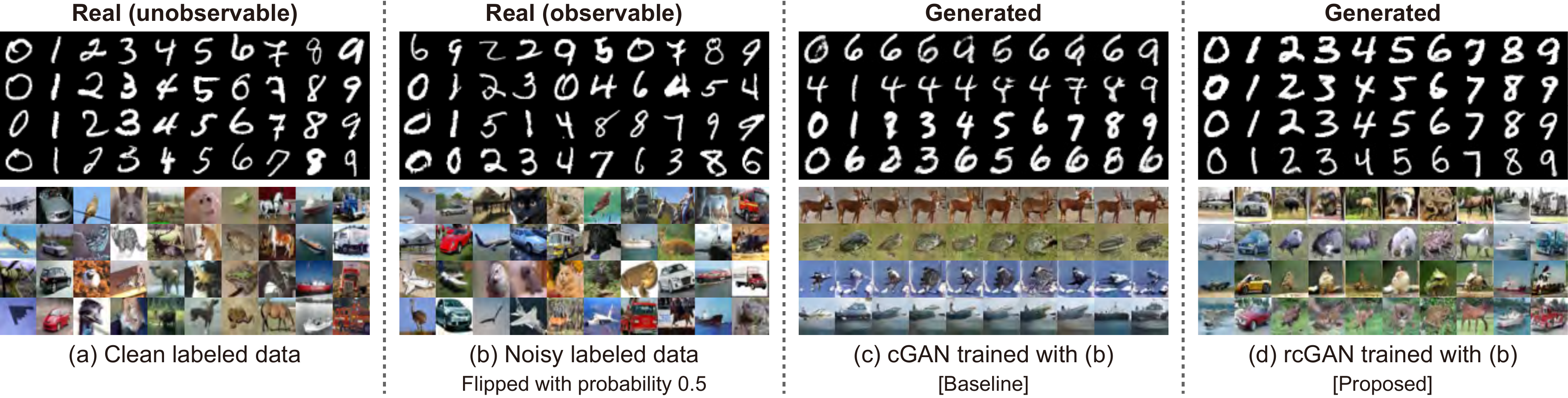}
  \end{center}
  \vspace{-3mm}
  \captionof{figure}{
    Examples of label-noise robust conditional image generation. Each column shows samples belonging to the same class. In (c) and (d), each row contains samples generated with a fixed ${\bm z}$ and a varied $y^g$. Our goal is, given \textit{noisy} labeled data (b), to learn a conditional generative distribution that corresponds with \textit{clean} labeled data (a). When naive cGAN (c) is trained with (b), it fails to learn the disentangled representations, disturbed by \textit{noisy} labeled data. In contrast, proposed rcGAN (d) succeeds in learning the representations disentangled on the basis of \textit{clean} labels, which are close to (a), even when we can only access the \textit{noisy} labeled data (b) during training.}
  \label{fig:concept}
  \vspace{5mm}
}]

\begin{abstract}
  Generative adversarial networks (GANs) are a framework that learns a generative distribution through adversarial training. Recently, their class-conditional extensions (e.g., conditional GAN (cGAN) and auxiliary classifier GAN (AC-GAN)) have attracted much attention owing to their ability to learn the disentangled representations and to improve the training stability. However, their training requires the availability of large-scale accurate class-labeled data, which are often laborious or impractical to collect in a real-world scenario. To remedy this, we propose a novel family of GANs called label-noise robust GANs (rGANs), which, by incorporating a noise transition model, can learn a clean label conditional generative distribution even when training labels are noisy. In particular, we propose two variants: rAC-GAN, which is a bridging model between AC-GAN and the label-noise robust classification model, and rcGAN, which is an extension of cGAN and solves this problem with no reliance on any classifier. In addition to providing the theoretical background, we demonstrate the effectiveness of our models through extensive experiments using diverse GAN configurations, various noise settings, and multiple evaluation metrics (in which we tested 402 conditions in total). Our code is available at \url{https://github.com/takuhirok/rGAN/}.
  \vspace{-5mm}
\end{abstract}

\newpage
\section{Introduction}
\label{sec:introduction}

In computer vision and machine learning, generative modeling has been actively studied to generate or reproduce samples indistinguishable from real data. Recently, deep generative models have emerged as a powerful framework for addressing this problem. Among them, generative adversarial networks (GANs)~\cite{IGoodfellowNIPS2014}, which learn a generative distribution through adversarial training, have become a prominent one owing to their ability to learn any data distribution without explicit density estimation. This mitigates oversmoothing resulting from data distribution approximation, and GANs have succeeded in producing high-fidelity data for various tasks \cite{TKarrasICLR2018,TMiyatoICLR2018b,HZhangArXiv2018,ABrockArXiv2018,PIsolaCVPR2017,CLedigCVPR2017,AShrivastavaCVPR2017,TKimICML2017,ZYiICCV2017,JYZhuICCV2017,SIizukaTOG2017,MLiuNIPS2017,YChoiCVPR2018,TCWangCVPR2018,TCWangArXiv2018,JZhuECCV2016,ABrockICLR2017,TKanekoCVPR2017}.

Along with this success, various extensions of GANs have been proposed. Among them, class-conditional extensions (e.g., conditional GAN (cGAN)~\cite{MMirzaArXiv2014,TMiyatoICLR2018} and auxiliary classifier GAN (AC-GAN)~\cite{AOdenaICML2017}) have attracted much attention mainly for two reasons. (1) By incorporating class labels as supervision, they can learn the representations that are disentangled between the class labels and the other factors. This allows them to selectively generate images conditioned on the class labels~\cite{MMirzaArXiv2014,AOdenaICML2017,TKanekoCVPR2017,ZZhangCVPR2017,TKanekoCVPR2018,YChoiCVPR2018}. Recently, this usefulness has also been demonstrated in class-specific data augmentation~\cite{MFAdarISBI2018,ZZhangCVPR2018}. (2) The added supervision simplifies the learned target from an overall distribution to the conditional distribution. This helps stabilize the GAN training, which is typically unstable, and improves image quality~\cite{AOdenaICML2017,TMiyatoICLR2018,HZhangArXiv2018,ABrockArXiv2018}.

In contrast to these powerful properties, a possible limitation is that typical models rely on the availability of large-scale accurate class-labeled data and their performance depends on their accuracy. Indeed, as shown in Figure~\ref{fig:concept}(c), when conventional cGAN is applied to noisy labeled data (where half labels are randomly flipped, as shown in Figure~\ref{fig:concept}(b)), its performance is significantly degraded, influenced by the noisy labels. When datasets are constructed in real-world scenarios (e.g., crawled from websites or annotated via crowdsourcing), they tend to contain many mislabeled data (e.g., in Clothing1M~\cite{TXiaoCVPR2015}, the overall annotation accuracy is only 61.54\%). Therefore, this limitation would restrict application.

Motivated by these backgrounds, we address the following problem: \textit{``How can we learn a clean label conditional distribution even when training labels are noisy?''} To solve this problem, we propose a novel family of GANs called \textit{label-noise robust GANs} (\textit{rGANs}) that incorporate a noise transition model representing a transition probability between the clean and noisy labels. In particular, we propose two variants: \textit{rAC-GAN}, which is a bridging model between AC-GAN~\cite{AOdenaICML2017} and the label-noise robust classification model, and \textit{rcGAN}, which is an extension of cGAN~\cite{MMirzaArXiv2014,TMiyatoICLR2018} and solves this problem with no reliance on any classifier. As examples, we show generated image samples using rcGAN in Figure~\ref{fig:concept}(d). As shown in this figure, our rcGAN is able to generate images conditioned on clean labels even where conventional cGAN suffers from severe degradation.

Another important issue regarding learning deep neural networks (DNNs) using noisy labeled data is the memorization effect. In image classification, a recent study~\cite{CZhangICLR2017} empirically demonstrated that DNNs can fit even noisy (or random) labels. Another study~\cite{DArpitICML2017} experimentally showed that there are qualitative differences between DNNs trained on clean and noisy labeled data. To the best of our knowledge, no previous studies have sufficiently examined such an effect for conditional deep generative models. Motivated by these facts, in addition to providing a theoretical background on rAC-GAN and rcGAN, we conducted extensive experiments to examine the gap between theory and practice. In particular, we evaluated our models using diverse GAN configurations from standard to state-of-the-art in various label-noise settings including synthetic and real-world noise. We also tested our methods in the case when a noise transition model is known and in the case when it is not. Furthermore, we introduce an improved technique to stabilize training in a severely noisy setting (e.g., that in which 90\% of the labels are corrupted) and show the effectiveness.

Overall, our contributions are summarized as follows:
\begin{itemize}
  \vspace{-1mm}
  \setlength{\parskip}{1pt}
  \setlength{\itemsep}{0pt}
\item We tackle a novel problem called \textit{label-noise robust conditional image generation}, in which the goal is to learn a clean label conditional generative distribution even when training labels are noisy.
\item To solve this problem, we propose a new family of GANs called \textit{rGANs} that incorporate a noise transition model into conditional extensions of GANs. In particular, we propose two variants, i.e., \textit{rAC-GAN} and \textit{rcGAN}, for the two representative class-conditional GANs, i.e., AC-GAN and cGAN.
\item In addition to providing a theoretical background, we examine the gap between theory and practice through extensive experiments (in which we tested 402 conditions in total). Our code is available at \url{https://github.com/takuhirok/rGAN/}.
  \vspace{-1mm}
\end{itemize}

\section{Related work}
\label{sec:related}

\noindent\textbf{Deep generative models.}
Generative modeling has been a fundamental problem and has been actively studied in computer vision and machine learning. Recently, deep generative models have emerged as a powerful framework. Among them, three popular approaches are GANs~\cite{IGoodfellowNIPS2014}, variational autoencoders (VAEs)~\cite{DKingmaICLR2014,DRezendeICML2014}, and autoregressive models (ARs)~\cite{AOordICML2016}. All these models have pros and cons. One well-known problem with GANs is training instability; however, the recent studies have been making a great stride in solving this problem~\cite{EDentonNIPS2015,ARadfordICLR2016,TSalimansNIPS2016,JZhaoICLR2017,MArjovskyICLR2017,MArjovskyICML2017,XMaoICCV2017,IGulrajaniNIPS2017,TKarrasICLR2018,XWeiICLR2018,TMiyatoICLR2018b,LMeschederICML2018,HZhangArXiv2018,ABrockArXiv2018}. In this paper, we focus on GANs because they have flexibility to the data representation, allowing for incorporating a noise transition model. However, with regard to VAEs and ARs, conditional extensions~\cite{DKingmaNIPS2014,EMansimovICLR2016,XYanECCV2016,AOordNIPS2016,SReedICLRW2017} have been proposed, and incorporating our ideas into them is a possible direction of future work.

\smallskip\noindent\textbf{Conditional extensions of GANs.}
As discussed in Section~\ref{sec:introduction}, conditional extensions of GANs have been actively studied to learn the representations that are disentangled between the conditional information and the other factors or to stabilize training and boost image quality. Other than class or attribute labels~\cite{MMirzaArXiv2014,AOdenaICML2017,TKanekoCVPR2017,ZZhangCVPR2017,TKanekoCVPR2018,YChoiCVPR2018}, texts~\cite{SReedICML2016,HZhangICCV2017,HZhangArXiv2017,XTaoCVPR2018}, object locations~\cite{SReedNIPS2016}, images~\cite{EDentonNIPS2015,PIsolaCVPR2017,CLedigCVPR2017,TCWangCVPR2018}, or videos~\cite{TCWangArXiv2018} are used as conditional information, and the effectiveness of conditional extensions of GANs has also been verified for them. In this paper, we focus on the situation in which noise exists in the label domain because obtaining robustness in such a domain has been a fundamental and important problem in image classification and has been actively studied, as discussed in the next paragraph. However, also in other domains (e.g., texts or images), it is highly likely that noise may exist when data are collected in real-world scenarios (e.g., crawled from websites or annotated via crowdsourcing). We believe that our findings would help the research also in these domains.

\smallskip\noindent\textbf{Label-noise robust models.}
Learning with noisy labels has been keenly studied since addressed in the learning theory community~\cite{DAngluinML1988,NNatarajanNIPS2013}. Lately, this problem has also been studied in image classification with DNNs. For instance, to obtain label-noise robustness, one approach replaces a typical cross-entropy loss with a noise-tolerant loss~\cite{AGhoshAAAI2017,ZZhangArXiv2018}. Another approach cleans up labels or selects clean labels out of noisy labels using neural network predictions or gradient directions~\cite{SReedICLR2015,DTanakaCVPR2018,EMalachNIPS2017,LJiangICML2018,MRenICML2018,BHanArXiv2018}. The other approach incorporates a noise transition model~\cite{SSukhbaatarICLRW2015,IJindalICDM2016,GPatriniCVPR2017,JGoldbergerICLR2017}, similarly to ours. These studies show promising results in both theory and practice and our study is based on their findings.

The main difference from them is that their goal is to obtain label-noise robustness in image classification, but our goal is to obtain such robustness in conditional image generation. We remark that our developed rAC-GAN internally uses a classifier; thus, it can be viewed as a bridging model between noise robust image classification and conditional image generation. Note that we also developed rcGAN, which is a classifier-free model, motivated by the recent studies~\cite{AOdenaICML2017,TMiyatoICLR2018} that indicate that AC-GAN tends to lose diversity through a side effect of generating recognizable (i.e., classifiable) images. Another related topic is \textit{pixel}-noise robust image generation~\cite{ABoraICLR2018,JLehtinenICML2018}. The difference from them is that they focused on the noise inserted in a \textit{pixel} domain, but we focus on the noise in a \textit{label} domain.

\section{Notation and problem statement}
\label{sec:notation}
We begin by defining notation and the problem statement. Throughout, we use superscript $r$ to denote the real distribution and $g$ the generative distribution. Let ${\bm x} \in {\cal X}$ be the target data (e.g., images) and $y \in {\cal Y}$ the corresponding class label. Here, ${\cal X}$ is the data space ${\cal X} \subseteq {\mathbb R}^d$, where $d$ is the dimension of the data, and ${\cal Y}$ is the label space ${\cal Y} = \{1, \dots, c \}$, where $c$ is the number of classes. We assume that $y$ is noisy (and we denote such noisy label by $\tilde{y}$) and there exists a corresponding clean label $\hat{y}$ that we cannot observe during training. In particular, we assume \textit{class-dependent} noise in which each clean label $\hat{y} = i$ is corrupted to a noisy label $\tilde{y} = j$ with a probability $p(\tilde{y} = j |\hat{y} = i) = T_{i, j}$, independently of ${\bm x}$, where we define a noise transition matrix as $T = (T_{i, j})\in [0, 1]^{c \times c}$ $(\sum_i T_{i, j} = 1)$. Note that this assumption is commonly used in label-noise robust image classification (e.g., \cite{AGhoshAAAI2017,ZZhangArXiv2018,SSukhbaatarICLRW2015,IJindalICDM2016,GPatriniCVPR2017,JGoldbergerICLR2017}).

Our task is, when given noisy labeled samples $({\bm x}^r, \tilde{y}^r) \sim \tilde{p}^r({\bm x}, \tilde{y})$, to construct a label-noise robust conditional generator such that $\hat{p}^g({\bm x}, \hat{y}) = \hat{p}^r({\bm x}, \hat{y})$, which can generate ${\bm x}$ conditioned on \textit{clean} $\hat{y}$ rather than conditioned on \textit{noisy} $\tilde{y}$. This task is challenging for typical conditional generative models, such as AC-GAN~\cite{AOdenaICML2017} (Figure~\ref{fig:networks}(b)) and cGAN~\cite{MMirzaArXiv2014,TMiyatoICLR2018} (Figure~\ref{fig:networks}(d)), because they attempt to construct a generator conditioned on the observable labels; i.e., in this case, they attempt to construct a \textit{noisy}-label-dependent generator that generates ${\bm x}$ conditioned on \textit{noisy} $\tilde{y}$ rather than conditioned on \textit{clean} $\hat{y}$. Our main idea for solving this problem is to incorporate a noise transition model, i.e., $p(\tilde{y}|\hat{y})$, into these models (viewed as orange rectangles in Figures~\ref{fig:networks}(c) and (e)). In particular, we develop two variants: rAC-GAN and rcGAN. We describe their details in Sections~\ref{sec:rAC-GAN} and \ref{sec:rcGAN},
respectively.

\begin{figure*}[t]
  \centering
  \includegraphics[width=1.0\textwidth]{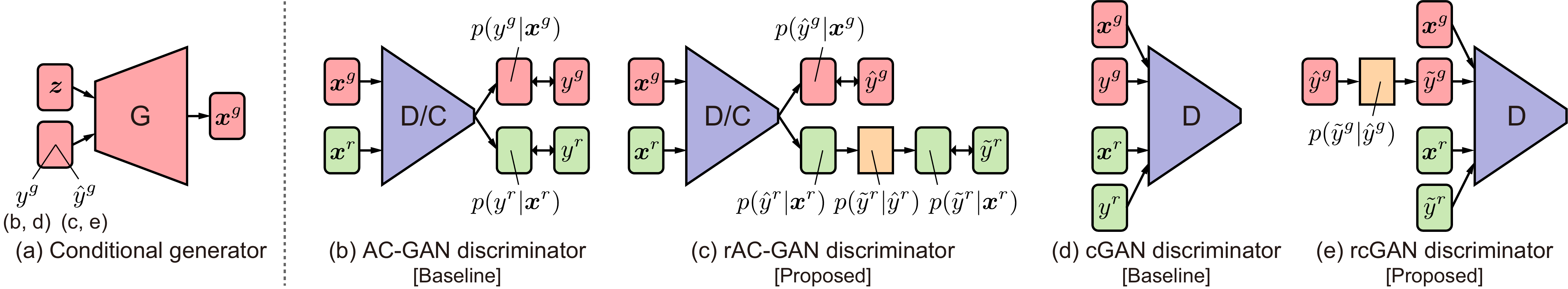}
  \caption{Comparison of naive and label-noise robust GANs. We denote the generator, discriminator, and auxiliary classifier by $G$, $D$, and $C$, respectively. Among all models, conditional generators (a) are similar. In our rAC-GAN (c) and rcGAN (e), we incorporate a noise transition model (viewed as an orange rectangle) into AC-GAN (b) and cGAN (d), respectively.}
  \label{fig:networks}
  \vspace{-3mm}
\end{figure*}

\section{Label-noise robust AC-GAN: rAC-GAN}
\label{sec:rAC-GAN}

\subsection{Background: AC-GAN}
\label{subsec:AC-GAN}

AC-GAN~\cite{AOdenaICML2017} is one of representative conditional extensions of GANs~\cite{IGoodfellowNIPS2014}. AC-GAN learns a conditional generator $G$ that transforms noise ${\bm z}$ and label ${y}^g$ into data ${\bm x}^g = G({\bm z}, {y}^g)$ with two networks. One is a discriminator $D$ that assigns probability $p = D({\bm x})$ for samples ${\bm x} \sim p^r({\bm x})$ and assigns $1 - p$ for samples ${\bm x} \sim p^g({\bm x})$. The other is an auxiliary classifier $C(y|{\bm x})$ that represents a probability distribution over class labels given ${\bm x}$. These networks are optimized by using two losses, namely, an adversarial loss and an auxiliary classifier loss.

\smallskip\noindent\textbf{Adversarial loss.}
An adversarial loss is defined as
\begin{flalign}
  \label{eqn:gan}
  {\cal L}_{\rm{GAN}} = & \: \mathbb{E}_{{\bm x}^r \sim p^r({\bm x})} [\log D({\bm x}^r)]
  \nonumber \\
  + & \: \mathbb{E}_{{\bm z} \sim p({\bm z}), {y}^g \sim p(y)} [\log (1 - D(G({\bm z}, {y}^g)))],
\end{flalign}
where $D$ attempts to find the best decision boundary between real and generated data by maximizing this loss, and $G$ attempts to generate data indistinguishable by $D$ by minimizing this loss.

\smallskip\noindent\textbf{Auxiliary classifier loss.}
An auxiliary classifier loss is used to make the generated data belong to the target class. To achieve this, first $C$ is optimized using a classification loss of real data:
\begin{flalign}
  \label{eqn:acgan_cls_r}
  {\cal L}_{\rm{AC}}^r = \mathbb{E}_{({\bm x}^r, {y}^r) \sim p^r ({\bm x}, y)} [-\log C(y = {y}^r|{\bm x}^r)],
\end{flalign}
where $C$ learns to classify real data to the corresponding class by minimizing this loss. Then, $G$ is optimized by using a classification loss of generated data:
\begin{flalign}
  \label{eqn:acgan_cls_g}
  {\cal L}_{\rm{AC}}^g = \mathbb{E}_{{\bm z} \sim p({\bm z}), {y}^g \sim p(y)} [-\log C(y = {y}^g|G({\bm z}, {y}^g))],
\end{flalign}
where $G$ attempts to generate data belonging to the corresponding class by minimizing this loss.

\smallskip\noindent\textbf{Full objective.}
In practice, shared networks between $D$ and $C$ are commonly used~\cite{AOdenaICML2017,IGulrajaniNIPS2017}. In this setting, the full objective is written as
\begin{flalign}
  \label{eqn:acgan}
  {\cal L}_{D\mbox{/}C} = & \: -{\cal L}_{\rm{GAN}} + \lambda_{\rm AC}^r {\cal L}_{\rm{AC}}^r,
  \\
  {\cal L}_{G} = & \: {\cal L}_{\rm{GAN}} + \lambda_{\rm AC}^g {\cal L}_{\rm{AC}}^g,
\end{flalign}
where $\lambda_{\rm AC}^r$ and $\lambda_{\rm AC}^g$ are the trade-off parameters between the adversarial loss and the auxiliary classifier loss for the real and generated data, respectively. $D\mbox{/}C$ and $G$ are optimized by minimizing ${\cal L}_{D\mbox{/}C}$ and ${\cal L}_{G}$, respectively.

\subsection{rAC-GAN}
\label{subsec:rAC-GAN}

By the above definition, when ${y}^r$ is noisy (i.e., $\tilde{y}^r$ is given) and $C$ fits such noisy labels,\footnote{Zhang et al.~\cite{CZhangICLR2017} discuss generalization and memorization of DNNs and empirically demonstrated that DNNs are capable of fitting even noisy (or random) labels. Although other studies empirically demonstrated that some techniques (e.g., dropout~\cite{DArpitICML2017}, mixup~\cite{HZhangICLR2018}, and high learning rate~\cite{DTanakaCVPR2018}) are useful for preventing DNNs from memorizing noisy labels, their theoretical support still remains as an open issue. In this paper, we conducted experiments on various GAN configurations to investigate such effect in our task. See Section~\ref{subsec:comp_eval} for details.} AC-GAN learns the \textit{noisy} label conditional generator $G({\bm z},  \tilde{y}^g)$. In contrast, our goal is to construct the \textit{clean} label conditional generator $G({\bm z}, \hat{{y}}^g)$. To achieve this goal, we incorporate a noise transition model (i.e., $p(\tilde{y}|\hat{y})$; viewed as an orange rectangle in Figure~\ref{fig:networks}(c)) into the auxiliary classifier. In particular, we reformulate the auxiliary classifier loss as
\begin{flalign}
  \label{eqn:racgan_cls_r}
  & \: {\cal L}_{\rm{rAC}}^r = \mathbb{E}_{({\bm x}^r, \tilde{y}^r) \sim \tilde{p}^r ({\bm x}, \tilde{y})} [-\log \tilde{C}(\tilde{y}=\tilde{y}^r|{\bm x}^r)]
  \nonumber \\
  = & \: \mathbb{E}_{({\bm x}^r, \tilde{y}^r) \sim \tilde{p}^r ({\bm x}, \tilde{y})} \nonumber \\
  & \:\:\:\:\:\:\:\:\:\:\:\:
  [-\log \sum_{\hat{y}^r} p(\tilde{y} = \tilde{y}^r|\hat{y} = \hat{y}^r) \hat{C}(\hat{y} = \hat{y}^r|{\bm x}^r)]
  \nonumber \\
  = & \: \mathbb{E}_{({\bm x}^r, \tilde{y}^r) \sim \tilde{p}^r ({\bm x}, \tilde{y})} [-\log \sum_{\hat{y}^r} T_{\hat{y}^r, \tilde{y}^r} \hat{C}(\hat{y} = \hat{y}^r|{\bm x}^r)],  
\end{flalign}
where we denote the \textit{noisy} label classifier by $\tilde{C}$ and the \textit{clean} label classifier by $\hat{C}$ (and we explain the reason why we call it \textit{clean} in Theorem~\ref{th:racgan}). Between the first and second lines, we assume that the noise transition is independent of ${\bm x}$, as mentioned in Section~\ref{sec:notation}. Note that this formulation (called the \textit{forward correction}) is often used in label-noise robust classification models~\cite{SSukhbaatarICLRW2015,IJindalICDM2016,GPatriniCVPR2017,JGoldbergerICLR2017} and rAC-GAN can be viewed as a bridging model between GANs and them. In naive AC-GAN, $\tilde{C}$ is optimized for ${\cal L}_{\rm{AC}}^r$, whereas in our rAC-GAN, $\hat{C}$ is optimized for ${\cal L}_{\rm{rAC}}^r$. Similarly, $G$ is optimized using $\hat{C}$ rather than using $\tilde{C}$:
\begin{eqnarray}
  \label{eqn:racgan_cls_g}
  {\cal L}_{\rm{rAC}}^g = \mathbb{E}_{{\bm z} \sim p({\bm z}), \hat{y}^g \sim p(\hat{y})} [-\log \hat{C}(\hat{y} = \hat{y}^g|G({\bm z}, \hat{y}^g))].
\end{eqnarray}

\smallskip\noindent\textbf{Theoretical background.}
In the above, we use a cross-entropy loss, which is a kind of proper composite loss~\cite{MReidJMLR2010}. In this case, Theorem 2 in~\cite{GPatriniCVPR2017} shows that minimizing the forward corrected loss (i.e., Equation~\ref{eqn:racgan_cls_r}) is equal to minimizing the original loss under the clean distribution. More precisely, the following theorem holds.
\begin{theorem}
  \label{th:racgan}
  When $T$ is nonsingular,
  \begin{flalign}
    \label{eqn:forward}
    & \: \argmin_{\hat{C}} \mathbb{E}_{({\bm x}^r, \tilde{y}^r) \sim \tilde{p}^r ({\bm x}, \tilde{y})} [-\log \sum_{\hat{y}^r} T_{\hat{y}^r, \tilde{y}^r} \hat{C}(\hat{y} = \hat{y}^r|{\bm x}^r)]
    \nonumber \\
    = & \: \argmin_{\hat{C}} \mathbb{E}_{({\bm x}^r, \hat{y}^r) \sim \hat{p}^r ({\bm x}, \hat{y})} [-\log \hat{C}(\hat{y} = \hat{y}^r|{\bm x}^r)].
  \end{flalign}
\end{theorem}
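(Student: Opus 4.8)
The plan is to reduce each side to a pointwise minimization over the probability simplex (one minimization per ${\bm x}$, which is legitimate when we optimize over all measurable $\hat C$) and then to exhibit a common minimizer, namely the true clean posterior. First I would isolate the two ingredients that do all the work. The first is that cross-entropy is a proper loss: for a fixed probability vector ${\bm p}=(p_1,\dots,p_c)$, the function ${\bm q}\mapsto-\sum_i p_i\log q_i$ attains its minimum over the simplex uniquely at ${\bm q}={\bm p}$ (Gibbs' inequality). The second is the link between the clean and noisy posteriors: since the noise is independent of ${\bm x}$, we have $\tilde p^r(\tilde y=j|{\bm x})=\sum_i p(\tilde y=j|\hat y=i)\,\hat p^r(\hat y=i|{\bm x})=\sum_i T_{i,j}\,\hat p^r(\hat y=i|{\bm x})$. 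Collecting the posteriors into column vectors $\hat{\bm\eta}({\bm x})$ and $\tilde{\bm\eta}({\bm x})$, this reads $\tilde{\bm\eta}({\bm x})=T^\top\hat{\bm\eta}({\bm x})$.

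Applying the first fact to the right-hand side shows at once that its pointwise minimizer is $\hat C(\hat y=i|{\bm x})=\hat p^r(\hat y=i|{\bm x})$, which is exactly the sense in which $\hat C$ is the \emph{clean} classifier. For the left-hand side, I would regard the forward-corrected output ${\bm x}\mapsto T^\top\hat{\bm c}({\bm x})$, with $j$-th entry $\sum_i T_{i,j}\hat C(\hat y=i|{\bm x})$, as the prediction being scored. This output is itself a probability vector, because each row of $T$ sums to one (it is the conditional law $p(\tilde y|\hat y=i)$), so the integrand is genuinely the cross-entropy between $\tilde{\bm\eta}({\bm x})$ and $T^\top\hat{\bm c}({\bm x})$. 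By the first fact again, it is minimized precisely when $T^\top\hat{\bm c}({\bm x})=\tilde{\bm\eta}({\bm x})$.

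It remains to invert this relation. Using the posterior link, $T^\top\hat{\bm c}({\bm x})=\tilde{\bm\eta}({\bm x})=T^\top\hat{\bm\eta}({\bm x})$, and nonsingularity of $T$ (hence of $T^\top$) lets me cancel to obtain $\hat{\bm c}({\bm x})=\hat{\bm\eta}({\bm x})$. Thus the left-hand side is also minimized by $\hat C=\hat p^r(\hat y|\cdot)$. Since both objectives are minimized by the same function, their $\argmin$ sets coincide, which is the assertion (and recovers Theorem~2 of \cite{GPatriniCVPR2017}).

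The step I expect to require the most care is the constrained minimization on the left. The properness fact minimizes cross-entropy over \emph{all} probability vectors, whereas here the prediction is confined to the image of the simplex under $T^\top$. The resolution is that the unconstrained optimum $\tilde{\bm\eta}({\bm x})$ already lies in that image: taking $\hat{\bm c}=\hat{\bm\eta}$ gives a bona fide probability vector mapping to it, so the optimum is attained under the constraint and remains optimal. Nonsingularity of $T$ is what then promotes this to \emph{identifiability}: it guarantees a unique preimage, so the clean posterior is the only minimizer rather than one of several classifiers consistent with the observed noisy labels. This is precisely where the hypothesis on $T$ is indispensable, since a singular $T$ would collapse distinct clean classifiers onto the same noisy prediction.
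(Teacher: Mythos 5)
Your proof is correct and takes essentially the same route as the paper's: the paper does not prove Theorem~\ref{th:racgan} inline but defers to Theorem~2 of \cite{GPatriniCVPR2017}, whose argument is precisely yours --- properness of the cross-entropy (Gibbs' inequality), the posterior identity $\tilde{p}^r(\tilde{y}|{\bm x}) = \sum_{\hat{y}} T_{\hat{y},\tilde{y}}\,\hat{p}^r(\hat{y}|{\bm x})$, and injectivity of $T^{\top}$ to identify the clean posterior as the unique minimizer on both sides. Your extra care about attainability (that the unconstrained optimum $\tilde{\bm\eta}({\bm x})$ lies in the image of the simplex under $T^{\top}$, being $T^{\top}\hat{\bm\eta}({\bm x})$) is a detail the cited proof also relies on; nothing is missing.
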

For a detailed proof, refer to Theorem 2 in~\cite{GPatriniCVPR2017}. This supports the idea that, by minimizing ${\cal L}_{\rm{rAC}}^r$ for noisy labeled samples, we can obtain $\hat{C}$ that classifies ${\bm x}$ as its corresponding clean label $\hat{y}$. In rAC-GAN, $G$ is optimized for this \textit{clean} classifier $\hat{C}$; hence, in $G$'s input space, $\hat{y}^g$ is encouraged to represent clean labels.

\section{Label-noise robust cGAN: rcGAN}
\label{sec:rcGAN}

\subsection{Background: cGAN}
\label{subsec:cGAN}

cGAN~\cite{MMirzaArXiv2014,TMiyatoICLR2018} is another representative conditional extension of GANs~\cite{IGoodfellowNIPS2014}. In cGAN, a conditional generator $G({\bm z}, {y}^g)$ and a conditional discriminator $D({\bm x}, y)$ are jointly trained using a conditional adversarial loss.

\smallskip\noindent\textbf{Conditional adversarial loss.}
A conditional adversarial loss is defined as
\begin{flalign}
  \label{eqn:cgan}
  & {\cal L}_{\rm{cGAN}} = \mathbb{E}_{({\bm x}^r, {y}^r) \sim p^r({\bm x}, {y})} [\log D({\bm x}^r, {y}^r)]
  \nonumber \\
  & \:\:\:\:\:\:\:\: + \mathbb{E}_{{\bm z} \sim p({\bm z}), {y}^g \sim p({y})} [\log (1 - D(G({\bm z}, {y}^g), {y}^g))],
\end{flalign}
where $D$ attempts to find the best decision boundary between real and generated data conditioned on $y$ by maximizing this loss. In contrast, $G$ attempts to generate data indistinguishable by $D$ with a constraint on ${y}^g$ by minimizing this loss. In an optimal condition~\cite{IGoodfellowNIPS2014}, cGAN learns $G(\bm{z}, y)$ such that $p^g({\bm x}, {y}) = p^r({\bm x}, {y})$.

\subsection{rcGAN}
\label{subsec:rcGAN}

By the above definition, when ${y}^r$ is noisy (i.e., $\tilde{y}^r$ is given), cGAN learns the \textit{noisy} label conditional generator $G({\bm z}, \tilde{y}^g)$. In contrast, our goal is to construct the \textit{clean} label conditional generator $G({\bm z}, \hat{y}^g)$. To achieve this goal, we insert a noise transition model (viewed as an orange rectangle in Figure~\ref{fig:networks}(e)) before $\hat{y}^g$ is given to $D$. In particular, we sample $\tilde{y}^g$ from $\tilde{y}^g \sim p(\tilde{y} | \hat{y}^g)$ and redefine Equation~\ref{eqn:cgan} as
\begin{flalign}
  \label{eqn:rcgan}
  & {\cal L}_{\rm{rcGAN}} = \mathbb{E}_{({\bm x}^r, \tilde{y}^r) \sim \tilde{p}^r({\bm x}, \tilde{y})} [\log D({\bm x}^r, \tilde{y}^r)]
  \nonumber \\
  & + \mathbb{E}_{{\bm z} \sim p({\bm z}), \hat{y}^g \sim p(\hat{y}), \tilde{y}^g \sim p(\tilde{y} | \hat{y}^g)} [\log (1 - D(G({\bm z}, \hat{y}^g), \tilde{y}^g))],
\end{flalign}
where $D$ attempts to find the best decision boundary between real and generated data conditioned on \textit{noisy} labels $\tilde{y}$, by maximizing this loss. In contrast, $G$ attempts to generate data indistinguishable by $D$ with a constraint on \textit{clean} labels $\hat{y}^g$ (and we explain the rationale behind calling it \textit{clean} in Theorem~\ref{th:rcgan}), by minimizing this loss.

\smallskip\noindent\textbf{Theoretical background.}
In an optimal condition, the following theorem holds.
\begin{theorem}
  \label{th:rcgan}
  When $T$ is nonsingular (i.e., $T$ has a unique inverse), $G$ is optimal if and only if $\hat{p}^g({\bm x}, \hat{y}) = \hat{p}^r({\bm x}, \hat{y})$.
\end{theorem}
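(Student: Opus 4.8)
The plan is to recognize that, in the noisy-label variable $\tilde{y}$, the objective ${\cal L}_{\rm{rcGAN}}$ is \emph{exactly} a conditional GAN, and then to transfer the resulting optimality condition back to the clean distributions by inverting the noise transition matrix. First I would identify the effective joint distribution that $D$ sees on the generated side. Because the generator first draws a clean label $\hat{y}^g \sim p(\hat{y})$, produces ${\bm x}^g = G({\bm z}, \hat{y}^g)$, and only then corrupts the label via $\tilde{y}^g \sim p(\tilde{y}\,|\,\hat{y}^g)$, the pair $({\bm x}^g, \tilde{y}^g)$ is distributed according to $\tilde{p}^g({\bm x}, \tilde{y}) = \sum_{\hat{y}} T_{\hat{y}, \tilde{y}}\, \hat{p}^g({\bm x}, \hat{y})$, where $\hat{p}^g({\bm x}, \hat{y}) = p(\hat{y})\, p^g({\bm x}\,|\,\hat{y})$ is the clean generative distribution. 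With this identification, Equation~\ref{eqn:rcgan} has precisely the form of the conditional adversarial loss in Equation~\ref{eqn:cgan}, with $\tilde{y}$ playing the role of the conditioning label and $\tilde{p}^g, \tilde{p}^r$ playing the roles of the generated and real joint distributions.

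Next I would invoke the standard optimality analysis of (conditional) GANs~\cite{IGoodfellowNIPS2014}: for a fixed $G$ the optimal discriminator is $D^*({\bm x}, \tilde{y}) = \tilde{p}^r({\bm x}, \tilde{y}) / (\tilde{p}^r({\bm x}, \tilde{y}) + \tilde{p}^g({\bm x}, \tilde{y}))$, and substituting it reduces the generator's objective to a Jensen--Shannon divergence between $\tilde{p}^r$ and $\tilde{p}^g$, whose unique minimum is attained exactly when $\tilde{p}^g({\bm x}, \tilde{y}) = \tilde{p}^r({\bm x}, \tilde{y})$ for all $({\bm x}, \tilde{y})$. Hence $G$ is optimal if and only if the two \emph{noisy} joint distributions coincide.

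Finally I would convert this noisy-level equality into the desired clean-level equality. By the class-dependent noise assumption of Section~\ref{sec:notation}, the real noisy distribution also factors through $T$, namely $\tilde{p}^r({\bm x}, \tilde{y}) = \sum_{\hat{y}} T_{\hat{y}, \tilde{y}}\, \hat{p}^r({\bm x}, \hat{y})$. Thus $\tilde{p}^g = \tilde{p}^r$ is equivalent to $\sum_{\hat{y}} T_{\hat{y}, \tilde{y}} \big( \hat{p}^g({\bm x}, \hat{y}) - \hat{p}^r({\bm x}, \hat{y}) \big) = 0$ for every $\tilde{y}$ and every ${\bm x}$. Fixing ${\bm x}$ and reading this over $\tilde{y} \in {\cal Y}$, it is the homogeneous linear system $T^{\top} {\bm v}({\bm x}) = {\bm 0}$, where ${\bm v}({\bm x})$ is the vector of clean-distribution differences indexed by $\hat{y}$. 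Since $T$ is nonsingular, so is $T^{\top}$, forcing ${\bm v}({\bm x}) = {\bm 0}$, i.e. $\hat{p}^g({\bm x}, \hat{y}) = \hat{p}^r({\bm x}, \hat{y})$; the converse follows immediately by left-multiplying this clean equality by $T$.

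The main obstacle is precisely this last inversion, and it is where nonsingularity of $T$ is indispensable: without it the map $\hat{p} \mapsto \tilde{p}$ has a nontrivial kernel, so matching the observable noisy distributions would leave the clean conditional generator underdetermined, and the ``only if'' direction would fail. A secondary subtlety I would flag is marginal consistency: the construction fixes the generated clean prior to $p(\hat{y})$, so for $\hat{p}^g = \hat{p}^r$ to be attainable one needs $p(\hat{y})$ to equal the clean class prior $\hat{p}^r(\hat{y})$, which is itself recoverable from the observable noisy marginal through the same nonsingular $T$. This does not affect the logical chain above but should be noted so that the optimum is in fact reachable.
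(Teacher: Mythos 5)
Your proposal is correct and follows essentially the same route as the paper's proof: recognize ${\cal L}_{\rm rcGAN}$ as a cGAN objective in the noisy variable $\tilde{y}$, apply the Goodfellow optimality result to get $\tilde{p}^g({\bm x},\tilde{y})=\tilde{p}^r({\bm x},\tilde{y})$, expand both sides through the transition matrix, and invert the nonsingular $T^{\top}$ (your kernel formulation is just a restatement of the paper's $\hat{P}^g=(T^{\top})^{-1}T^{\top}\hat{P}^r$ step). Your closing remark on matching the generated clean prior $p(\hat{y})$ to $\hat{p}^r(\hat{y})$ for attainability is a sensible extra observation that the paper leaves implicit, but it does not change the argument.
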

\begin{proof}{}
  For $G$ fixed, rcGAN is the same as cGAN where ${y}$ is replaced by $\tilde{y}$. Therefore, by extending Proposition 1 and Theorem 1 in~\cite{IGoodfellowNIPS2014} (GAN optimal solution) to a conditional setting, the optimal discriminator $D$ for fixed $G$ is
  \begin{flalign}
    \label{eqn:optim_d_rcgan}
    D({\bm x}, \tilde{y}) = \frac{\tilde{p}^r({\bm x}, \tilde{y})}{\tilde{p}^r({\bm x}, \tilde{y}) + \tilde{p}^g({\bm x}, \tilde{y})}.
  \end{flalign}
  Then $G$ is optimal if and only if
  \begin{flalign}
    \label{eqn:optim_g_rcgan}
    \tilde{p}^g({\bm x}, \tilde{y}) & = \tilde{p}^r({\bm x}, \tilde{y}).
  \end{flalign}
  As mentioned in Section~\ref{sec:notation}, we assume that label corruption occurs with $p(\tilde{y}|\hat{y})$, i.e., independently of ${\bm x}$. In this case,
  \begin{flalign}
    \label{eqn:transition}
    \tilde{p}({\bm x}, \tilde{y}) & = \tilde{p}(\tilde{y}|{\bm x}) p({\bm x}) = \sum_{\hat{y}} p(\tilde{y}|\hat{y}) \hat{p}(\hat{y}|{\bm x}) p({\bm x})
    \nonumber \\
    & = \sum_{\hat{y}} p(\tilde{y}|\hat{y}) \hat{p}({\bm x}, \hat{y})
    = \sum_{\hat{y}} T_{\hat{y}, \tilde{y}} \hat{p}({\bm x}, \hat{y}).
  \end{flalign}
  Substituting Equation~\ref{eqn:transition} into Equation~\ref{eqn:optim_g_rcgan} gives
  \begin{flalign}
    \label{eqn:optim_g_rcgan2}
    \sum_{\hat{y}} T_{\hat{y}, \tilde{y}} \hat{p}^g({\bm x}, \hat{y}) = \sum_{\hat{y}} T_{\hat{y}, \tilde{y}} \hat{p}^r({\bm x}, \hat{y}).
  \end{flalign}
  By considering the matrix form,
  \begin{flalign}
    \label{eqn:optim_g_rcgan3}
    T^{\top} \hat{P}^g = T^{\top} \hat{P}^r,
  \end{flalign}
  where $\hat{P}^g = [\hat{p}^g ({\bm x}, \hat{y} = 1), \dots,
  \hat{p}^g ({\bm x}, \hat{y} = c)]^{\top}$ and
  $\hat{P}^r = [\hat{p}^r ({\bm x}, \hat{y} = 1), \dots,
  \hat{p}^r ({\bm x}, \hat{y} = c)]^{\top}$.
  When $T$ has an inverse,
  \begin{flalign}
    \label{eqn:optim_g_rcgan4}
    T^{\top} \hat{P}^g = T^{\top} \hat{P}^r
    \Leftrightarrow \hat{P}^g = (T^{\top})^{-1} T^{\top} \hat{P}^r = \hat{P}^r.
  \end{flalign}
  As the corresponding elements in $\hat{P}^g$ and $\hat{P}^r$ are equal,
  $\hat{p}^g ({\bm x}, \hat{y}) = \hat{p}^r ({\bm x}, \hat{y})$.
\end{proof}
This supports the idea that, in an optimal condition, rcGAN learns $G({\bm z}, \hat{y})$ such that $\hat{p}^g({\bm x}, \hat{y}) = \hat{p}^r({\bm x}, \hat{y})$.

\renewcommand{\baselinestretch}{0.97}\selectfont

\section{Advanced techniques for practice}
\label{sec:advance}

\subsection{Noise transition probability estimation}
\label{subsec:estT}

In the above, we assume that $T$ is known, but this assumption may be too strict for real-world applications. However, fortunately, previous studies~\cite{SSukhbaatarICLRW2015,IJindalICDM2016,GPatriniCVPR2017,JGoldbergerICLR2017} have been eagerly tackling this problem and several methods for estimating $T'$ (where we denote the estimated $T$ by $T'$) have been proposed. Among them, we tested a \textit{robust two-stage training algorithm}~\cite{GPatriniCVPR2017} in the experiments and analyzed the effects of estimated $T'$. We show the results in Section~\ref{subsec:estT_eval}.

\subsection{Improved technique for severely noisy data}
\label{subsec:imp}

Thorough extensive experiments, we find that some GAN configurations suffer from performance degradation in a severely noisy setting (e.g., in which 90\% of the labels are corrupted). In this type of environment, each label is flipped with a high probability. This disturbs $G$ form associating an image with a label. To strengthen their connection, we incorporate mutual information regularization~\cite{XChenNIPS2016}:
\begin{eqnarray}
  \label{eqn:info}
  {\cal L}_{\rm MI} = \mathbb{E}_{{\bm z} \sim p({\bm z}), \hat{y}^g \sim p(\hat{y})} [-\log Q (\hat{y} = \hat{y}^g | G({\bm z}, \hat{y}^g))],
\end{eqnarray}
where $Q(\hat{y}|{\bm x})$ is an auxiliary distribution approximating a true posterior $p(\hat{y}|{\bm x})$. We optimize $G$ and $Q$ by minimizing this loss with trade-off parameters $\lambda_{\rm MI}^g$ and $\lambda_{\rm MI}^q$, respectively. This formulation is similar to Equation~\ref{eqn:racgan_cls_g}, but the difference is whether $G$ is optimized for $\hat{C}$ (optimized using real images and noisy labels) or for $Q$ (optimized using generated images and clean labels). We demonstrate the effectiveness of this technique in Section~\ref{subsec:imp_eval}.

\section{Experiments}
\label{sec:experiments}

\begin{figure*}[t]
  \centering
  \includegraphics[width=1.0\textwidth]{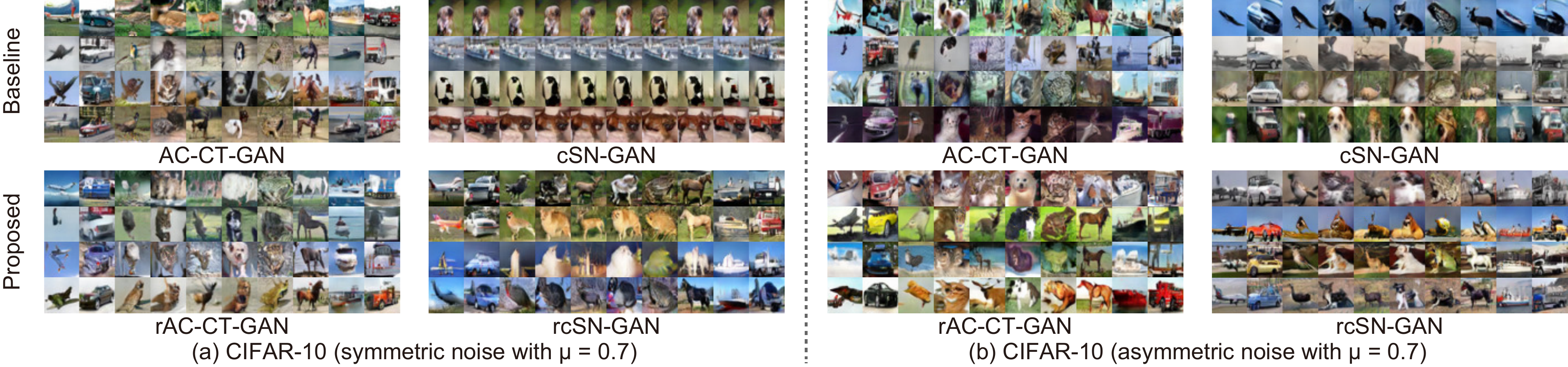}  
  \caption{Generated image samples on CIFAR-10. Each column shows samples belonging to the same class. Each row contains samples generated with a fixed ${\bm z}$ and a varied $y^g$. In symmetric noise (a), cSN-GAN is primarily influenced by noisy labels and fails to learn the disentangled representations. In asymmetric noise (b), it is expected that fourth and sixth columns will include cat and dog, respectively. However, in AC-CT-GAN and cSN-GAN, these columns  contain the inverse. As evidence, we list the accuracy in the fourth column for cat/dog classes in Table~\ref{tab:acc}. These scores indicate that the proposed models are robust but the baselines are weak for the flipped classes. See Figures~\ref{fig:dcgan_gen}--\ref{fig:sngan_gen} in the Appendix for more samples.}
  \label{fig:cifar10_gen}
  \vspace{-5mm}
\end{figure*}

\subsection{Comprehensive study}
\label{subsec:comp_eval}

In Sections~\ref{sec:rAC-GAN} and \ref{sec:rcGAN}, we showed that our approach is theoretically grounded. However, generally, in DNNs, there is still a gap between theory and practice. In particular, the label-noise effect in DNNs just recently began to be discussed in image classification~\cite{CZhangICLR2017,DArpitICML2017}, and it is demonstrated that such a gap exists. However, in conditional image generation, such an effect has not been sufficiently examined. To advance this research, we first conducted a comprehensive study, i.e., compared the performance of conventional AC-GAN and cGAN and proposed rAC-GAN and rcGAN using diverse GAN configurations in various label-noise settings with multiple evaluation metrics.\footnote{Through Sections~\ref{subsec:comp_eval}--\ref{subsec:imp_eval}, we tested 392 conditions in total. For each condition, we trained two models with different initializations and report the results averaged over them.} Due to the space limitation, we briefly review the experimental setup and only provide the important results in this main text. See the Appendix and our \href{https://takuhirok.github.io/rGAN/}{website} for details and more results.

\smallskip\noindent\textbf{Dataset.}
We verified the effectiveness of our method on two benchmark datasets: CIFAR-10 and CIFAR-100~\cite{AKrizhevskyTech2009}, which are commonly used in both image generation and label-noise robust image classification. Both datasets contain $60k$ $32 \times 32$ natural images, which are divided into $50k$ training and $10k$ test images. CIFAR-10 and CIFAR-100 have 10 and 100 classes, respectively. We assumed two label-noise settings that are popularly used in label-noise robust image classification: (1) {\bf Symmetric} (class-independent) noise~\cite{RVanNIPS2015}: For all classes, ground truth labels are replaced with uniform random classes with probability $\mu$. (2) {\bf Asymmetric} (class-dependent) noise~\cite{GPatriniCVPR2017}: Ground truth labels are flipped with probability $\mu$ by mimicking real mistakes between similar classes. Following~\cite{GPatriniCVPR2017}, for CIFAR-10, ground truth labels are replaced with \textit{truck} $\rightarrow$ \textit{automobile}, \textit{bird} $\rightarrow$ \textit{airplane}, \textit{deer} $\rightarrow$ \textit{horse}, and \textit{cat} $\leftrightarrow$ \textit{dog}, and for CIFAR-100, ground truth labels are flipped into the next class circularly within the same superclasses. In both settings, we selected $\mu$ from $\{ 0.1, 0.3, 0.5, 0.7, 0.9\}$.

\smallskip\noindent\textbf{GAN configurations.}
A recent study~\cite{MLucicArXiv2017} shows the sensitivity of GANs to hyperparameters. However, when clean labeled data are not available, it is impractical to tune the hyperparameters for each label-noise setting. Hence, instead of searching for the best model with hyperparameter tuning, we tested various GAN configurations using the default parameters that are typically used in clean label settings and examined the label-noise effect. We chose four models to cover standard, widely accepted, and state-of-the-art models: {\bf DCGAN}~\cite{ARadfordICLR2016}, {\bf WGAN-GP}~\cite{IGulrajaniNIPS2017}, {\bf CT-GAN}~\cite{XWeiICLR2018}, and {\bf SN-GAN}~\cite{TMiyatoICLR2018b}. We implemented AC-GAN, rAC-GAN, cGAN, and rcGAN based on them. For cGAN and rcGAN, we used the \textit{concat} discriminator~\cite{MMirzaArXiv2014} for DCGAN and the \textit{projection} discriminator~\cite{TMiyatoICLR2018} for the others.

\smallskip\noindent\textbf{Evaluation metrics.}
As discussed in previous studies~\cite{LTheisICLR2016,MLucicArXiv2017,KShmelkovECCV2018}, evaluation and comparison of GANs can be challenging partially because of the lack of an explicit likelihood measure. Considering this fact, we used four metrics for a comprehensive analysis: (1) the Fr\'{e}chet Inception distance ({\bf FID}), (2) {\bf Intra FID}, (3) the {\bf GAN-test}, and (4) the {\bf GAN-train}. The FID~\cite{MHeuselNIPS2017} measures the distance between $p^r$ and $p^g$ in Inception embeddings. We used it to assess the quality of an overall generative distribution. Intra FID~\cite{TMiyatoICLR2018} calculates the FID for each class. We used it to assess the quality of a conditional generative distribution.\footnote{We used Intra FID only for CIFAR-10 because, in CIFAR-100, the number of clean labeled data for each class (500) is insufficient.} The GAN-test~\cite{KShmelkovECCV2018} is the accuracy of a classifier trained on real images and evaluated on generated images. This metric approximates the precision (image quality) of GANs. The GAN-train~\cite{KShmelkovECCV2018} is the accuracy of a classifier trained on generated images and evaluated on real images in a test. This metric approximates the recall (diversity) of GANs.

\begin{figure*}[t]
  \centering
  \includegraphics[width=0.95\textwidth]{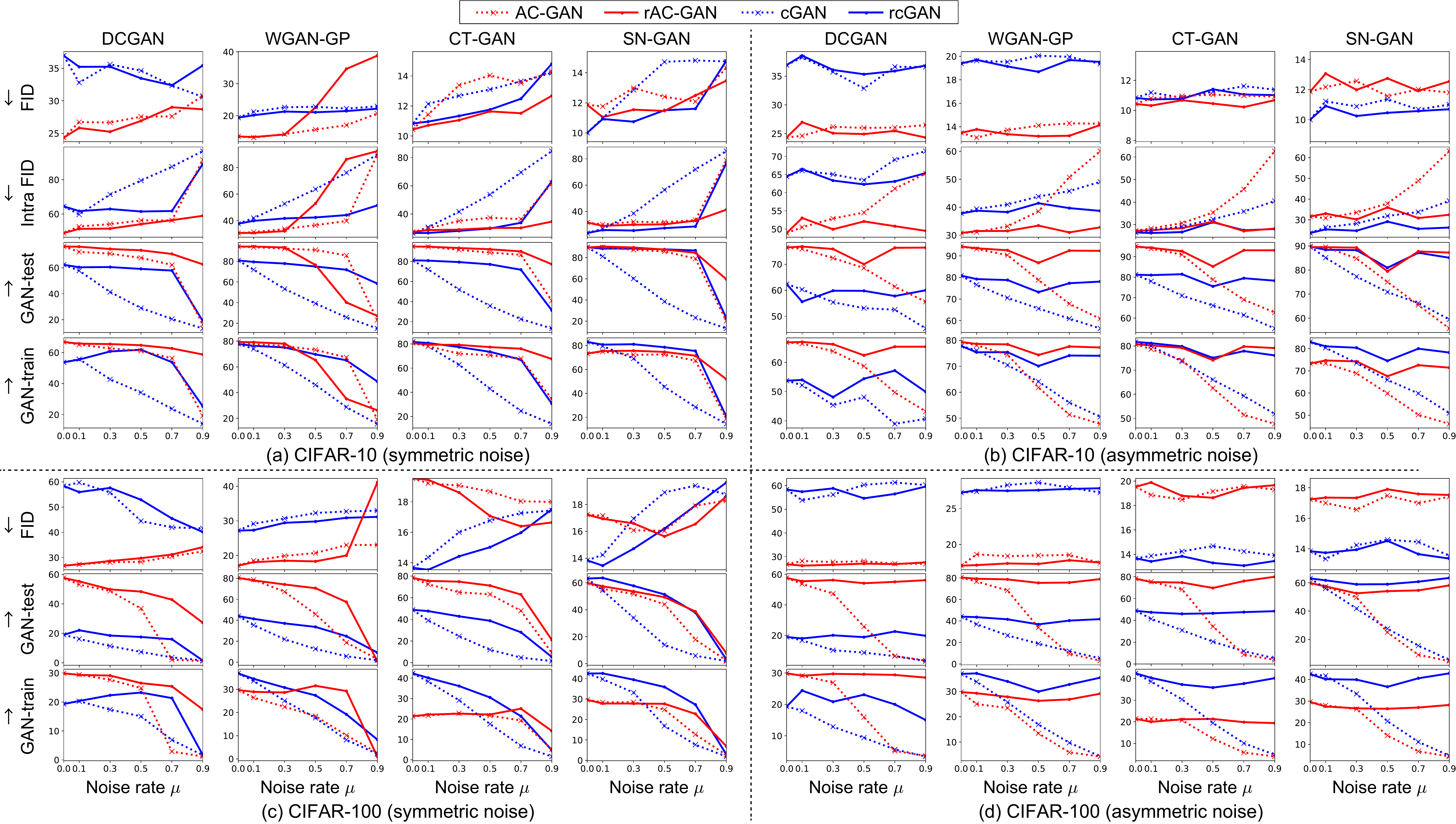}
  \caption{Quantitative results on CIFAR-10 and CIFAR-100. $\downarrow$ indicates the smaller the value, the better the performance. $\uparrow$ indicates the larger the value, the better the performance. Note that the scale is adjusted on each graph for easy viewing.}
  \vspace{-5mm}
  \label{fig:comp_eval}
\end{figure*}

\begin{table}[tb]
  \centering
  \scalebox{0.61}{
    \begin{tabular}{c|cccc}
      \bhline{1pt}
      & AC-CT-GAN & rAC-CT-GAN & cSN-GAN & rcSN-GAN
      \\ \bhline{0.75pt}
      cat/dog
      & 13.4/{\bf 83.9}
                  & {\bf 84.8}/10.3
                               & 35.6/{\bf 55.9}
                                         & {\bf 75.9}/13.0
      \\ \bhline{1pt}
    \end{tabular}
  }
  \vspace{0.5mm}
  \caption{Accuracy in the fourth column in Figure~\ref{fig:cifar10_gen}(b) (ground truth: cat) for the flipped classes (cat $\leftrightarrow$ dog)}
  \label{tab:acc}
  \vspace{-6mm}
\end{table}

\smallskip\noindent\textbf{Results.}
We present the quantitative results for each condition in Figure~\ref{fig:comp_eval} and provide a comparative summary between the proposed models (i.e., rAC-GAN and rcGAN) and the baselines (i.e., AC-GAN and cGAN) across all conditions in Figure~\ref{fig:vs}. We show the samples of generated images on CIFAR-10 with $\mu = 0.7$ in Figure~\ref{fig:cifar10_gen}. Regarding the FID (i.e., evaluating the quality of the overall generative distribution), the baselines and the proposed models are comparable in most cases, but when we use CT-GAN and SN-GAN (i.e., state-of-the-art models) in symmetric noise, the proposed models tend to outperform the baselines (32/40 conditions). This indicates that the label ambiguity caused by symmetric noise could disturb the learning of GANs if they have the high data-fitting ability. However, this degradation can be mitigated by using the proposed methods.

Regarding the other metrics (i.e., evaluating the quality of the conditional generative distribution), rAC-GAN and rcGAN tend to outperform AC-GAN and cGAN, respectively, across all the conditions. The one exception is rAC-WGAN-GP on CIFAR-10 with symmetric noise, but we find that it can be improved using the technique introduced in Section~\ref{subsec:imp}. We demonstrate this in Section~\ref{subsec:imp_eval}. Among the four models, CT-GAN and SN-GAN work relatively well for rAC-GAN and rcGAN, respectively. This tendency is also observed in clean label settings (i.e., $\mu = 0$). This indicates that the performance of rAC-GAN and rcGAN is closely related to the advance in the baseline GANs.

\begin{figure}[tb]
  \centering
  \includegraphics[width=\columnwidth]{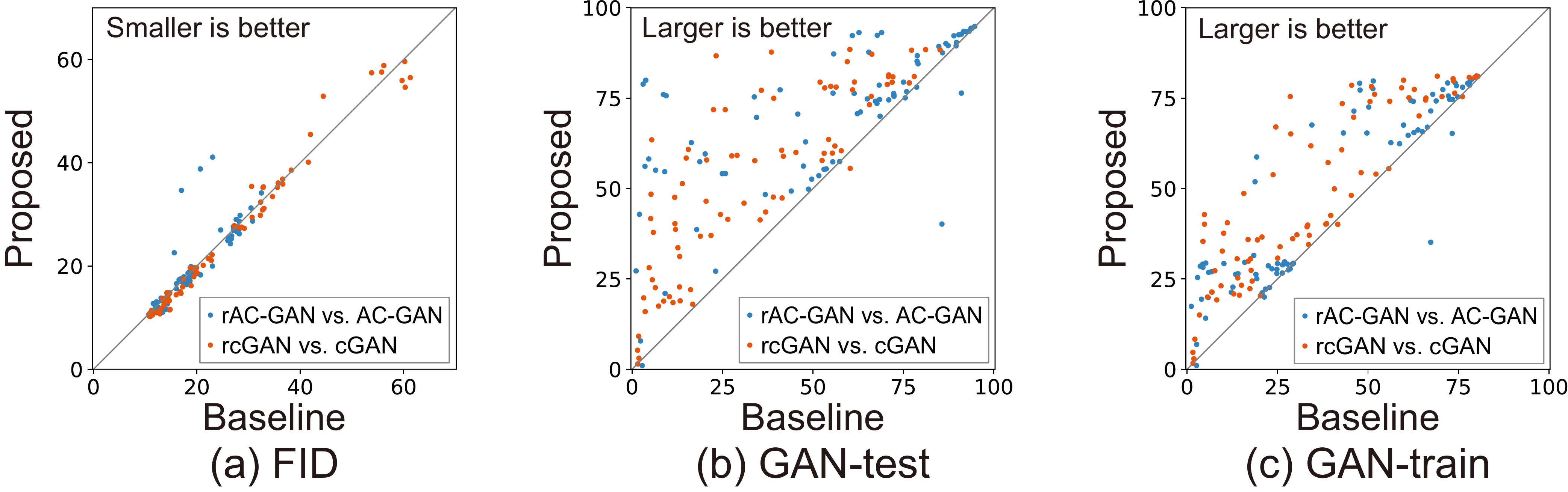}
  \caption{Comparison between the proposed models and the baselines
    across all the conditions in Figure \ref{fig:comp_eval}.}
  \label{fig:vs}
  \vspace{-3mm}
\end{figure}

\begin{table}[tb]
  \centering
  \scalebox{0.61}{
    \begin{tabular}{c|cccc}
      \bhline{1pt}
      & AC-GAN & rAC-GAN & cGAN & rcGAN
      \\ \bhline{0.75pt}
      Symmetric
      & -0.846 $\pm$ 0.084
               & -0.786 $\pm$ 0.163
                         & {\bf -0.989} $\pm$ 0.013
                                & -0.818 $\pm$ 0.142
      \\
      Asymmetric
      & {\bf -0.976} $\pm$ 0.008
               & -0.476 $\pm$ 0.119
                         & {\bf -0.985} $\pm$ 0.029
                                & -0.427 $\pm$ 0.274
      \\ \bhline{1pt}
    \end{tabular}
  }
  \vspace{0.5mm}
  \caption{Pearson correlation coefficient between the noise rate and GAN-train.
      The scores are averaged over all GAN configurations.}
  \label{tab:pearson}
  \vspace{-6mm}
\end{table}

An interesting finding is that the performance of cGAN in Intra FID, GAN-test, and GAN-train degrades linearly depending on the noise rate. To confirm this numerically, we calculated the Pearson correlation coefficient between the GAN-train and the noise rate. We list these in Table~\ref{tab:pearson}. These scores confirm that cGAN has the highest dependency on the noise rate, i.e., cGAN can fit even nosy labels. In contrast, AC-GAN shows robustness for symmetric noise but weakness for asymmetric noise. This would be related to the difficulty of memorization. In symmetric noise, the corruption variety is large, making it difficult to memorize labels. As a result, AC-GAN prioritizes learning simple (i.e., clean) labels, in a similar way as DNNs in image classification~\cite{DArpitICML2017}. In contrast, in asymmetric noise, the label corruption pattern is restrictive; as a result, AC-GAN easily fits noisy labels. Unlike AC-GAN, cGAN is a classifier-free model; therefore, cGAN tends to fit the given labels regardless of whether labels are noisy or not.

\begin{table*}
  \centering
  \scalebox{0.61}{
    \begin{tabular}{c|c|ccccc|ccccc|ccccc|ccccc}
      \bhline{1pt}
      \multirow{2}{*}{{Model}} & \multirow{2}{*}{{Metric}}
      & \multicolumn{5}{c|}{\!\!\!\! CIFAR-10 (symmetric noise) \!\!\!\!}
      & \multicolumn{5}{c|}{\!\!\!\! CIFAR-10 (asymmetric noise) \!\!\!\!}
      & \multicolumn{5}{c|}{\!\!\!\! CIFAR-100 (symmetric noise) \!\!\!\!}
      & \multicolumn{5}{c}{\!\!\!\! CIFAR-100 (asymmetric noise) \!\!\!\!}
      \\ \cline{3-22}
                               &
      & 0.1 & 0.3 & 0.5 & 0.7 & 0.9
      & 0.1 & 0.3 & 0.5 & 0.7 & 0.9
      & 0.1 & 0.3 & 0.5 & 0.7 & 0.9
      & 0.1 & 0.3 & 0.5 & 0.7 & 0.9
      \\ \bhline{0.75pt}
                               & FID $\downarrow$
      & 10.9 & 11.4 & 11.3 & 11.5 & 13.0                         
      & 10.8 & 10.2 & 10.2 & 10.4 & 11.0
      & 19.7 & 19.3 & 17.7 & 17.3 & 18.5
      & 19.4 & 19.3 & 19.7 & 18.8 & 19.0
      \\
      rAC-CT-GAN & Intra FID $\downarrow$
      & 28.7 & {\bf 31.0} & {\bf 30.1} & {\bf 31.7} & {\bf 38.9}
      & 28.5 & {\bf 27.4} & {\bf 31.2} & {\bf 35.0} & {\bf 36.8}
      & -- & -- & -- & -- & --
      & -- & -- & -- & -- & --
      \\
      with $T'$ & GAN-test $\uparrow$
      & 95.3 & 93.2 & {\bf 92.0} & 87.7 & {\bf 70.4}
      & 94.9 & 92.9 & {\bf 85.2} & {\bf 78.5} & {\bf 76.6}
      & {\bf 76.6} & 67.1 & {\bf 68.1} & \textit{1.0} & \textit{2.5}
      & 74.1 & 68.9 & \textit{28.7} & 7.2 & 2.2
      \\
                               & GAN-train $\uparrow$
      & 78.7 & {\bf 75.9} & {\bf 76.9} & {\bf 73.7} & {\bf 63.4}
      & 79.8 & {\bf 79.5} & {\bf 74.0} & {\bf 69.1} & {\bf 67.3}
      & 21.2 & 21.4 & 23.3 & \textit{1.0} & 2.3
      & 19.1 & 19.9 & 10.7 & 5.5 & 3.9
      \\ \hline
                               & FID $\downarrow$
      & 10.7 & 11.9 & 12.4 & 12.1 & 15.0
      & 10.8 & 10.8 & 11.0 & 10.9 & 11.3
      & 14.3 & 16.6 & 17.5 & 20.0 & 19.8
      & 13.8 & 14.1 & 14.7 & 14.7 & 13.9
      \\
      rcSN-GAN & Intra FID $\downarrow$
      & 25.5 & {\bf 29.4} & {\bf 29.4} & {\bf 29.7} & 87.4
      & 25.7 & 26.0 & {\bf 28.7} & 32.6 & {\bf 33.9}
      & -- & -- & -- & -- & --
      & -- & -- & -- & -- & --
      \\
      with $T'$ & GAN-test $\uparrow$
      & {\bf 85.3} & {\bf 79.0} & {\bf 84.8} & {\bf 82.8} & 15.9
      & 86.6 & {\bf 87.2} & {\bf 84.0} & {\bf 74.9} & {\bf 71.2}
      & 53.4 & 36.6 & {\bf 37.7} & \textit{1.0} & 1.7
      & {\bf 65.0} & {\bf 63.0} & {\bf 32.4} & \textit{7.8} & 3.8
      \\
                               & GAN-train $\uparrow$
      & 80.7 & {\bf 78.1} & {\bf 77.4} & {\bf 75.6} & 15.0
      & 80.5 & {\bf 79.0} & {\bf 75.7} & {\bf 69.3} & {\bf 65.7}
      & 40.1 & 32.8 & {\bf 31.3} & \textit{1.0} & 1.8
      & 41.7 & {\bf 39.3} & 20.1 & \textit{6.1} & 3.9
      \\ \bhline{1pt}
    \end{tabular}
  }
  \vspace{0.5mm}
  \caption{Quantitative results using the estimated $T'$. The second row indicates a noise rate. Bold and italic fonts indicate that the score is better or worse by more than 3 points over or under the baseline models (i.e., AC-CT-GAN or cSN-GAN), respectively. See Table~\ref{tab:estT_eval_ex} and Figure~\ref{fig:estT_gen} in the Appendix for more detailed comparison and generated image samples, respectively.}
  \label{tab:estT_eval}
  \vspace{-5mm}
\end{table*}

\subsection{Effects of estimated $T'$}
\label{subsec:estT_eval}

In Section~\ref{subsec:comp_eval}, we report the results using known $T$. As a more practical setting, we also evaluate our method with $T'$ estimated by a robust two-stage training algorithm~\cite{GPatriniCVPR2017}. We used CT-GAN for rAC-GAN and SN-GAN for rcGAN, which worked relatively well in both noisy and clean settings in Section~\ref{subsec:comp_eval}. We list the scores in Table~\ref{tab:estT_eval}. In CIFAR-10, even using $T'$, rAC-CT-GAN and rcSN-GAN tend to outperform conventional AC-CT-GAN and cSN-GAN, respectively, and show robustness to label noise. In CIFAR-100, when the noise rate is low, rAC-CT-GAN and rcSN-GAN work moderately well; however, in highly noisy settings, their performance is degraded. Note that such a tendency has also been observed in noisy label image classification with $T'$~\cite{GPatriniCVPR2017}, in which the authors argue that the high-rate mixture and limited number of images per class (500) make it difficult to estimate the correct $T$. Further improvement remains as an open issue.

\subsection{Evaluation of improved technique}
\label{subsec:imp_eval}

\begin{table}
  \centering
  \scalebox{0.61}{
    \begin{tabular}{c|c|cccc|cccc}
      \bhline{1pt}
      \multirow{2}{*}{Model}
      & \multirow{2}{*}{Metric}
      & \multicolumn{4}{c|}{CIFAR-10 (symmetric noise)}
      & \multicolumn{4}{c}{CIFAR-100 (symmetric noise)}
      \\ \cline{3-10}
      &
      & \,\,\, A \,\,\,
      & \,\,\, B \,\,\,
      & \,\,\, C \,\,\,
      & \,\,\, D \,\,\,
      & \,\,\, A \,\,\,
      & \,\,\, B \,\,\,
      & \,\,\, C \,\,\,
      & \,\,\, D \,\,\,
      \\ \bhline{0.75pt}
      & FID $\downarrow$
      & 27.9 & {\bf 14.7} & 12.4 & 13.5
      & 33.1 & {\bf 20.4} & 17.2 & 18.4
      \\
      Improved & Intra FID $\downarrow$
      & {\bf 55.7} & {\bf 34.6} & 33.4 & {\bf 36.9}
      & -- & -- & -- & --
      \\
      rAC-GAN & GAN-test $\uparrow$
      & 65.1 & {\bf 77.7} & 78.2 & {\bf 63.5}
      & 26.2 & {\bf 22.5} & 21.5 & {\bf 15.4}
      \\
      & GAN-train $\uparrow$
      & 59.9 & {\bf 70.8} & 69.1 & {\bf 59.7}
      & 17.1 & {\bf 16.3} & 14.8 & {\bf 11.7}
      \\ \hline
      & FID $\downarrow$
      & {\bf 30.4} & {\bf 16.9} & 14.2 & 14.9
      & \textit{50.2} & {\bf 25.8} & 18.0 & 18.7
      \\
      Improved & Intra FID $\downarrow$
      & {\bf 76.9} & {\bf 39.6} & {\bf 52.9} & {\bf 48.2}        
      & -- & -- & -- & --
      \\
      rcGAN & GAN-test $\uparrow$
      & {\bf 27.3} & {\bf 65.7} & {\bf 38.9} & {\bf 48.8}
      & {\bf 4.5} & 12.0 & {\bf 9.5} & {\bf 6.1}
      \\
      & GAN-train $\uparrow$
      & {\bf 31.9} & {\bf 60.7} & {\bf 36.7} & {\bf 47.3}
      & {\bf 6.0} & 10.3 & 7.5 & 4.4
      \\ \bhline{1pt}
    \end{tabular}
  }
  \vspace{0.5mm}
  \caption{Quantitative results using the improved technique. In the second row, A, B, C, and D indicate DCGAN, WGAN-GP, CT-GAN, and SN-GAN, respectively. We evaluated in severely noisy settings (i.e., symmetric noise with $\mu = 0.9$). Bold and italic fonts indicate that the score is better or worse by more than 3 points over or under naive models (i.e., rAC-GAN or rcGAN), respectively. See Table~\ref{tab:imp_eval_ex} and Figure~\ref{fig:imp_gen} in the Appendix for more detailed comparison and generated image samples, respectively.}
  \vspace{-1mm}
  \label{tab:imp_eval}
\end{table}

As shown in Figure~\ref{fig:comp_eval}, rAC-GAN and rcGAN show robustness for label noise in almost all cases, but we find that they are still weak to severely noisy settings (i.e., symmetric noise with $\mu = 0.9$) even though using known $T$. To improve the performance, we developed an improved technique (Section~\ref{subsec:imp}). In this section, we validate its effect. We list the scores in Table~\ref{tab:imp_eval}. We find that the improved degree depends on the GAN configurations, but, on the whole, the performance is improved by the proposed technique. In particular, we find that the improved technique is most effective for rAC-WGAN-GP, in which all the scores doubled compared to those of naive rAC-WGAN-GP.

\begin{table}
  \centering
  \scalebox{0.61}{
    \begin{tabular}{c|cc|cc|cc|cc|cc}
      \bhline{1pt}
      \multirow{2}{*}{Metric}
      & \multicolumn{2}{c|}{Clean}
      & \multicolumn{4}{c|}{Noisy}
      & \multicolumn{4}{c}{Mixed}
      \\ \cline{2-11}
      & AC & c
      & AC & rAC & c & rc
           & AC & rAC & c & rc
      \\ \bhline{0.75pt}
      FID $\downarrow$
      & 6.8 & 12.0
      & {\bf 4.4} & 4.6 & 9.4 & 9.4
           & 4.8 & {\bf 4.7} & 10.5 & {\bf 9.7}
      \\
      GAN-train $\uparrow$
      & 56.6 & 53.9
      & 49.5 & {\bf 51.7} & 48.6 & {\bf 49.8}
           & 52.8 & {\bf 57.0} & 51.7 & {\bf 55.0}
      \\ \bhline{1pt}
    \end{tabular}
  }
  \vspace{0.5mm}
  \caption{Quantitative results on Clothing1M. AC, rAC, c, and rc denote AC-CT-GAN, rAC-CT-GAN, cSN-GAN, and rcSN-GAN, respectively. Bold font indicates better scores in each block. See Figure~\ref{fig:clothing1m_gen} in the Appendix for generated image samples.}
  \label{tab:clothing1m_eval}
  \vspace{-5mm}
\end{table}

\subsection{Evaluation on real-world noise}
\label{subsec:clothing1m_eval}

Finally, we tested on Clothing1M~\cite{TXiaoCVPR2015} to analyze the effectiveness on real-world noise.\footnote{We tested 10 conditions in total. For each condition, we trained three models with different initializations and report the results averaged over them.} Clothing1M contains $1M$ clothing images in 14 classes. The data are collected from several online shopping websites and include many mislabeled samples. This dataset also contains $50k$, $14k$, and $10k$ of clean data for training, validation, and testing, respectively. Following the previous studies~\cite{TXiaoCVPR2015,GPatriniCVPR2017}, we approximated $T$ using the partial ($25k$) training data that have both clean and noisy labels. We tested on three settings: (1) $50k$ {\bf clean} data, (2) $1M$ {\bf noisy} data, and (3) {\bf mixed} data that consists of clean data (bootstrapped to $500k$) and $1M$ noisy data, which are used in~\cite{TXiaoCVPR2015} to boost the performance of image classification. We used AC-CT-GAN/rAC-CT-GAN and cSN-GAN/rcSN-GAN. We resized images from $256 \times 256$ to $64 \times 64$ to shorten the training time. 

\smallskip\noindent\textbf{Results.}
We list the scores in Table~\ref{tab:clothing1m_eval}.\footnote{We did not use Intra FID because the number of clean labeled data for each class is few. We did not use the GAN-test because this dataset is challenging and a trained classifier tends to be deceived by noisy data.} The comparison of FID values indicates that the scores depend on the number of data ({\bf noisy}, {\bf mixed} $>$ {\bf clean}) rather than the difference between the baseline and proposed models. This suggests that, in this type noise setting, the scale of the dataset should be made large, even though labels are noisy, to capture an overall distribution. In contrast, the comparison of the GAN-train between the clean and noisy data settings indicates the importance of label accuracy. In the noisy data setting, the scores improve using rAC-GAN or rcGAN but they are still worse than those using AC-GAN and cGAN in the clean data setting. The balanced models are rAC-GAN and rcGAN in the mixed data setting. They are comparable to the models in the noisy data setting in terms of the FID and outperform the models in the clean data setting in terms of the GAN-train. Recently, data augmentation~\cite{MFAdarISBI2018,ZZhangCVPR2018} has been studied intensively as an application of conditional generative models. We expect the above findings to provide an important direction in this space.

\section{Conclusion}
\label{sec:conclusion}

Recently, conditional extensions of GANs have shown promise in image generation; however, the limitation here is that they need large-scale accurate class-labeled data to be available. To remedy this, we developed a new family of GANs called rGANs that incorporate a noise transition model into conditional extensions of GANs. In particular, we introduced two variants: rAC-GAN, which is a bridging model between GANs and the noise-robust classification models, and rcGAN, which is an extension of cGAN and solves this problem with no reliance on any classifier. In addition to providing a theoretical background, we demonstrate the effectiveness and limitations of the proposed models through extensive experiments in various settings. In the future, we hope that our findings facilitate the construction of a conditional generative model in real-world scenarios in which only noisy labeled data are available.

\renewcommand{\baselinestretch}{1}\selectfont
\section*{Acknowledgement}
We thank Hiroharu Kato, Yusuke Mukuta, and Mikihiro Tanaka for helpful discussions. This work was supported by JSPS KAKENHI Grant Number JP17H06100, partially supported by JST CREST Grant Number JPMJCR1403, Japan, and partially supported by the Ministry of Education, Culture, Sports, Science and Technology (MEXT) as ``Seminal Issue on Post-K Computer.''

{\small
  \bibliographystyle{ieee}
  \bibliography{refs}
}

\clearpage
\appendix
\section{Contents}
\label{sec:contents}

\begin{itemize}
  \setlength{\parskip}{3pt}
  \setlength{\itemsep}{3pt}
\item Appendix~\ref{sec:ex} (pp. \pageref{sec:ex}--\pageref{fig:clothing1m_gen})
  \begin{itemize}
  \item We provide the extended results of Sections~\ref{subsec:comp_eval}--\ref{subsec:clothing1m_eval} in Appendices~\ref{subsec:comp_eval_ex}--\ref{subsec:clothing1m_eval_ex}, respectively.
  \end{itemize}
\item Appendix~\ref{sec:ana} (pp. \pageref{sec:ana}--\pageref{fig:intra_fid_noise})
  \begin{itemize}
  \item We provide an additional analysis.
  \end{itemize}
\item Appendices~\ref{sec:comp_eval_detail}--\ref{sec:clothing1m_eval_detail} (pp. \pageref{sec:comp_eval_detail}--\pageref{subsec:clothing1m_eval_detail})
  \begin{itemize}
  \item We describe the details of the experimental setup of Sections~\ref{subsec:comp_eval}--\ref{subsec:clothing1m_eval} in Appendices~\ref{sec:comp_eval_detail}--\ref{sec:clothing1m_eval_detail}, respectively.
  \end{itemize}
\end{itemize}

\section{Extended results}
\label{sec:ex}

\subsection{Extended results of Section~\ref{subsec:comp_eval}}
\label{subsec:comp_eval_ex}

As extended results of Section~\ref{subsec:comp_eval} (comprehensive study), we provide the image samples generated using the models evaluated in Section~\ref{subsec:comp_eval} in Figures~\ref{fig:dcgan_gen}--\ref{fig:sngan_gen}. An outline of the content is as follows:
\begin{itemize}
  \setlength{\parskip}{3pt}
  \setlength{\itemsep}{3pt}
\item Figure~\ref{fig:dcgan_gen}: Image samples generated using DCGANs (AC-DCGAN, rAC-DCGAN, cDCGAN, and rcDCGAN) on CIFAR-10
\item Figure~\ref{fig:wgangp_gen}: Image samples generated using WGAN-GPs (AC-WGAN-GP, rAC-WGAN-GP, cWGAN-GP, and rcWGAN-GP) on CIFAR-10
\item Figure~\ref{fig:ctgan_gen}: Image samples generated using CT-GANs (AC-CT-GAN, rAC-CT-GAN, cCT-GAN, and rcCT-GAN) on CIFAR-10
\item Figure~\ref{fig:sngan_gen}: Image samples generated using SN-GANs (AC-SN-GAN, rAC-SN-GAN, cSN-GAN, and rcSN-GAN) on CIFAR-10
\end{itemize}

\subsection{Extended results of Section~\ref{subsec:estT_eval}}
\label{subsec:estT_eval_ex}

We provide the extended results of Section~\ref{subsec:estT_eval} (effects of estimated $T'$) in Table~\ref{tab:estT_eval_ex}, Figure~\ref{fig:estT_eval_ex}, and Figure~\ref{fig:estT_gen}. An outline of the content is as follows:
\begin{itemize}
  \setlength{\parskip}{3pt}
  \setlength{\itemsep}{3pt}
\item Table~\ref{tab:estT_eval_ex}: Quantitative results using estimated $T'$ (extended version of Table~\ref{tab:estT_eval})
\item Figure~\ref{fig:estT_eval_ex}: Visualization of Table~\ref{tab:estT_eval_ex}
\item Figure~\ref{fig:estT_gen}: Image samples generated using rAC-CT-GAN with estimated $T'$ and rcSN-GAN with estimated $T'$ on CIFAR-10
\end{itemize}

\subsection{Extended results of Section~\ref{subsec:imp_eval}}
\label{subsec:imp_eval_ex}

We provide the extended results of Section~\ref{subsec:imp_eval} (evaluation of the improved technique) in Table~\ref{tab:imp_eval_ex} and Figure~\ref{fig:imp_gen}. An outline of the content is as follows:
\begin{itemize}
  \setlength{\parskip}{3pt}
  \setlength{\itemsep}{3pt}
\item Table~\ref{tab:imp_eval_ex}: Quantitative results using the improved technique (extended version of Table~\ref{tab:imp_eval})
\item Figure~\ref{fig:imp_gen}: Image samples generated using the improved rAC-GANs and improved rcGANs with combinations of DCGAN, WGAN-GP, CT-GAN, and SN-GAN  on CIFAR-10
\end{itemize}

\subsection{Extended results of Section~\ref{subsec:clothing1m_eval}}
\label{subsec:clothing1m_eval_ex}

We provide the extended results of Section~\ref{subsec:clothing1m_eval} (evaluation on the real-world noise) in Figure~\ref{fig:clothing1m_gen}. An outline of the content is as follows:
\begin{itemize}
  \setlength{\parskip}{3pt}
  \setlength{\itemsep}{3pt}
\item Figure~\ref{fig:clothing1m_gen}: Image samples generated using AC-CT-GAN, rAC-CT-GAN, cSN-GAN, and rcSN-GAN on Clothing1M (clean, noisy, and mixed settings)
\end{itemize}

\clearpage
\begin{figure*}[t]
  \centering
  \includegraphics[width=\textwidth]{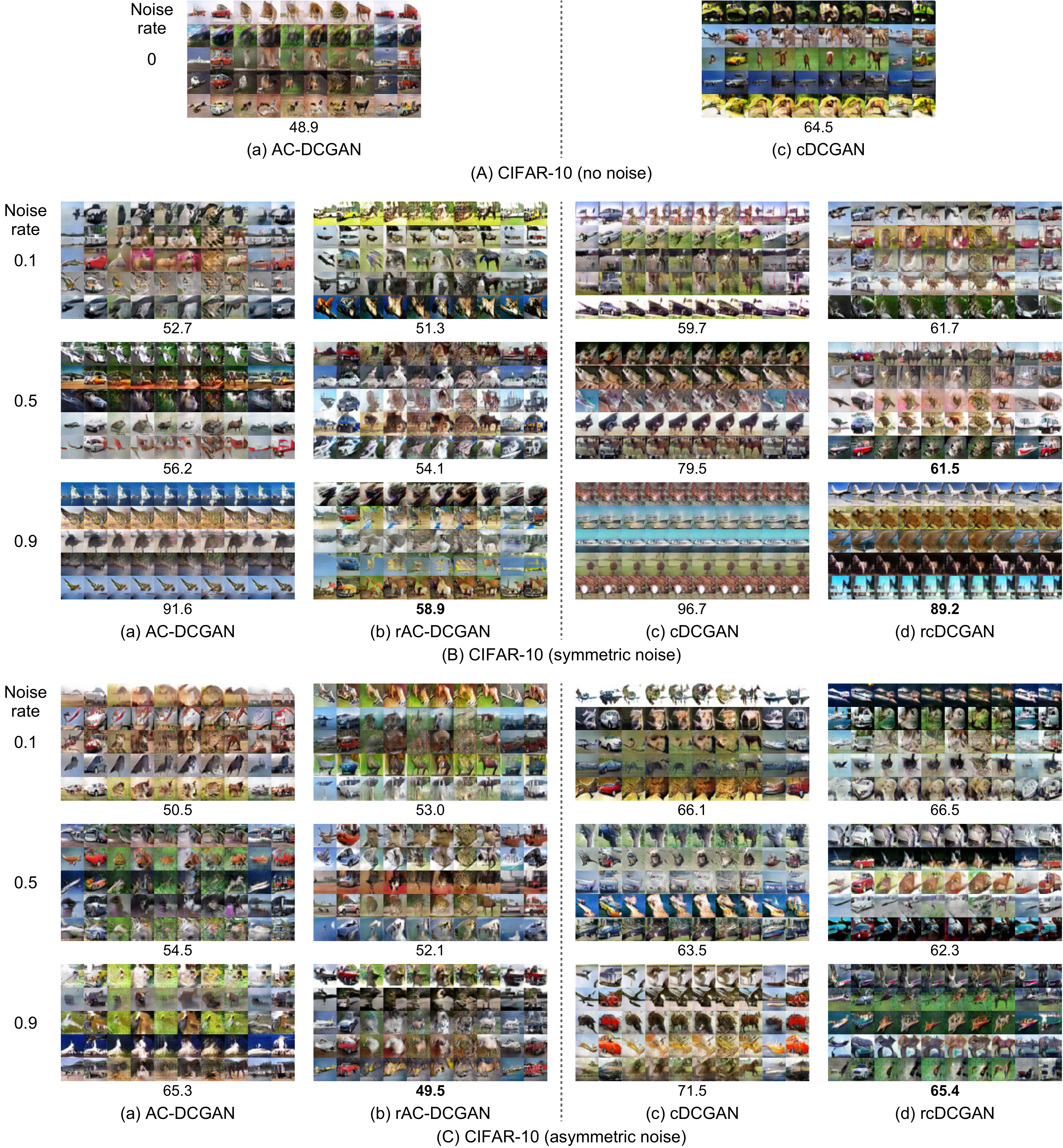}
  \caption{Image samples generated using (a) AC-DCGAN, (b) rAC-DCGAN, (c) cDCGAN, and (d) rcDCGAN on CIFAR-10 ((A) no noise, (B) symmetric noise, and (C) asymmetric noise). These models are discussed in Section~\ref{subsec:comp_eval}. In each picture block, each column shows samples associated with the same class. Each row includes samples generated from a fixed ${\bm z}$ and a varied $y^g$. The value below each picture block represents the achieved Intra FID (which is the same as the value reported in Figure~\ref{fig:comp_eval}). The smaller the value, the better. When the score difference between the baseline models (AC-DCGAN and cDCGAN) and the proposed models (rAC-DCGAN and rcDCGAN) is more than 3 points, we use bold font to indicate the better model.}
  \label{fig:dcgan_gen}
\end{figure*}

\begin{figure*}[t]
  \centering
  \includegraphics[width=\textwidth]{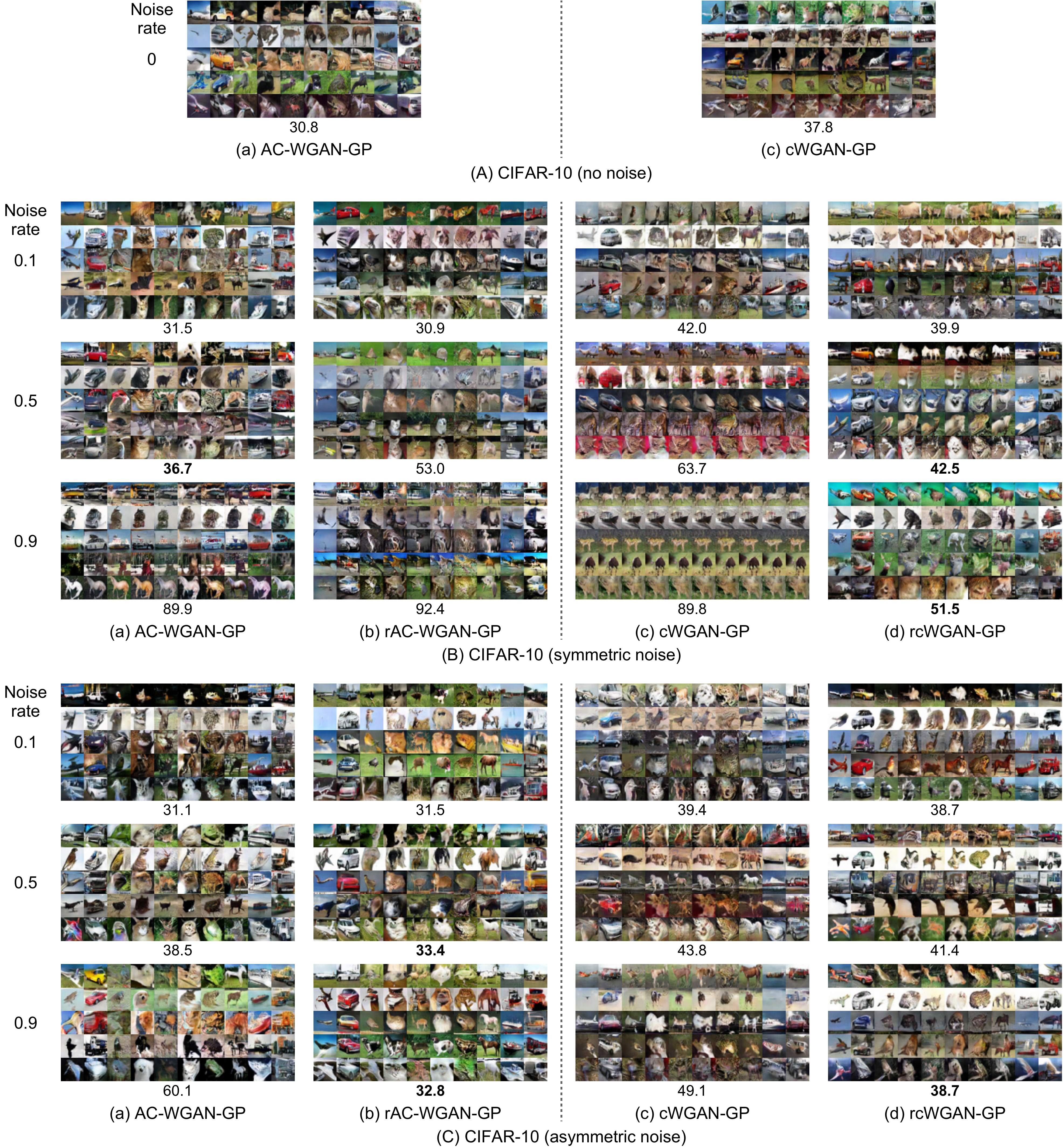}
  \caption{Image samples generated using (a) AC-WGAN-GP, (b) rAC-WGAN-GP, (c) cWGAN-GP, and (d) rcWGAN-GP on CIFAR-10 ((A) no noise, (B) symmetric noise, and (C) asymmetric noise). These models are discussed in Section~\ref{subsec:comp_eval}. In each picture block, each column shows samples associated with the same class. Each row includes samples generated from a fixed ${\bm z}$ and a varied $y^g$. The value below each picture block represents the achieved Intra FID (which is the same as the value reported in Figure~\ref{fig:comp_eval}). The smaller the value, the better. When the score difference between the baseline models (AC-WGAN-GP and cWGAN-GP) and the proposed models (rAC-WGAN-GP and rcWGAN-GP) is more than 3 points, we use bold font to indicate the better model.}
  \label{fig:wgangp_gen}
\end{figure*}

\begin{figure*}[t]
  \centering
  \includegraphics[width=\textwidth]{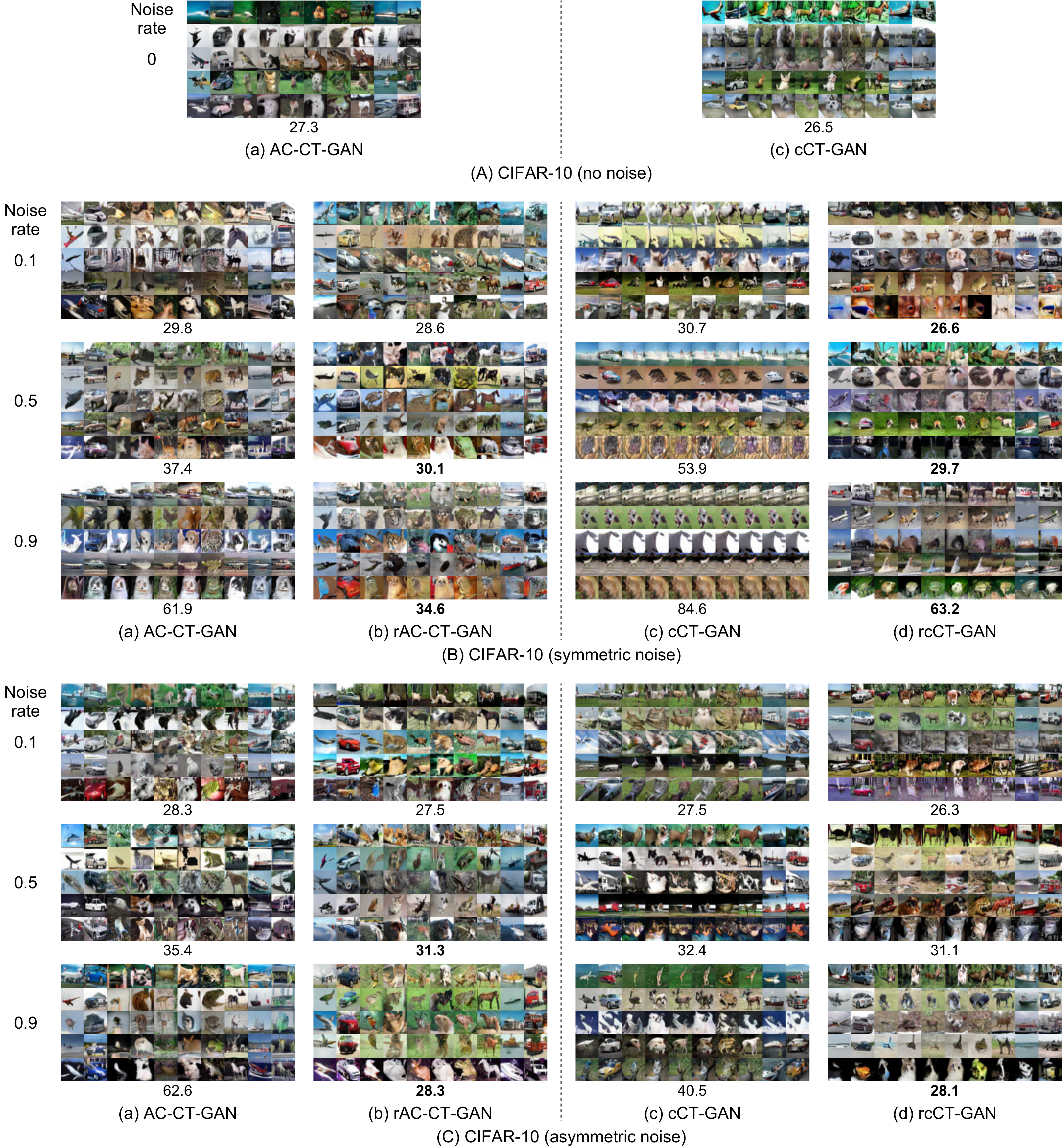}
  \caption{Image samples generated using (a) AC-CT-GAN, (b) rAC-CT-GAN, (c) cCT-GAN, and (d) rcCT-GAN on CIFAR-10 ((A) no noise, (B) symmetric noise, and (C) asymmetric noise). These models are discussed in Section~\ref{subsec:comp_eval}. In each picture block, each column shows samples associated with the same class. Each row includes samples generated from a fixed ${\bm z}$ and a varied $y^g$. The value below each picture block represents the achieved Intra FID (which is the same as the value reported in Figure~\ref{fig:comp_eval}). The smaller the value, the better. When the score difference between the baseline models (AC-CT-GAN and cCT-GAN) and the proposed models (rAC-CT-GAN and rcCT-GAN) is more than 3 points, we use bold font to indicate the better model.}
  \label{fig:ctgan_gen}
\end{figure*}

\begin{figure*}[t]
  \centering
  \includegraphics[width=\textwidth]{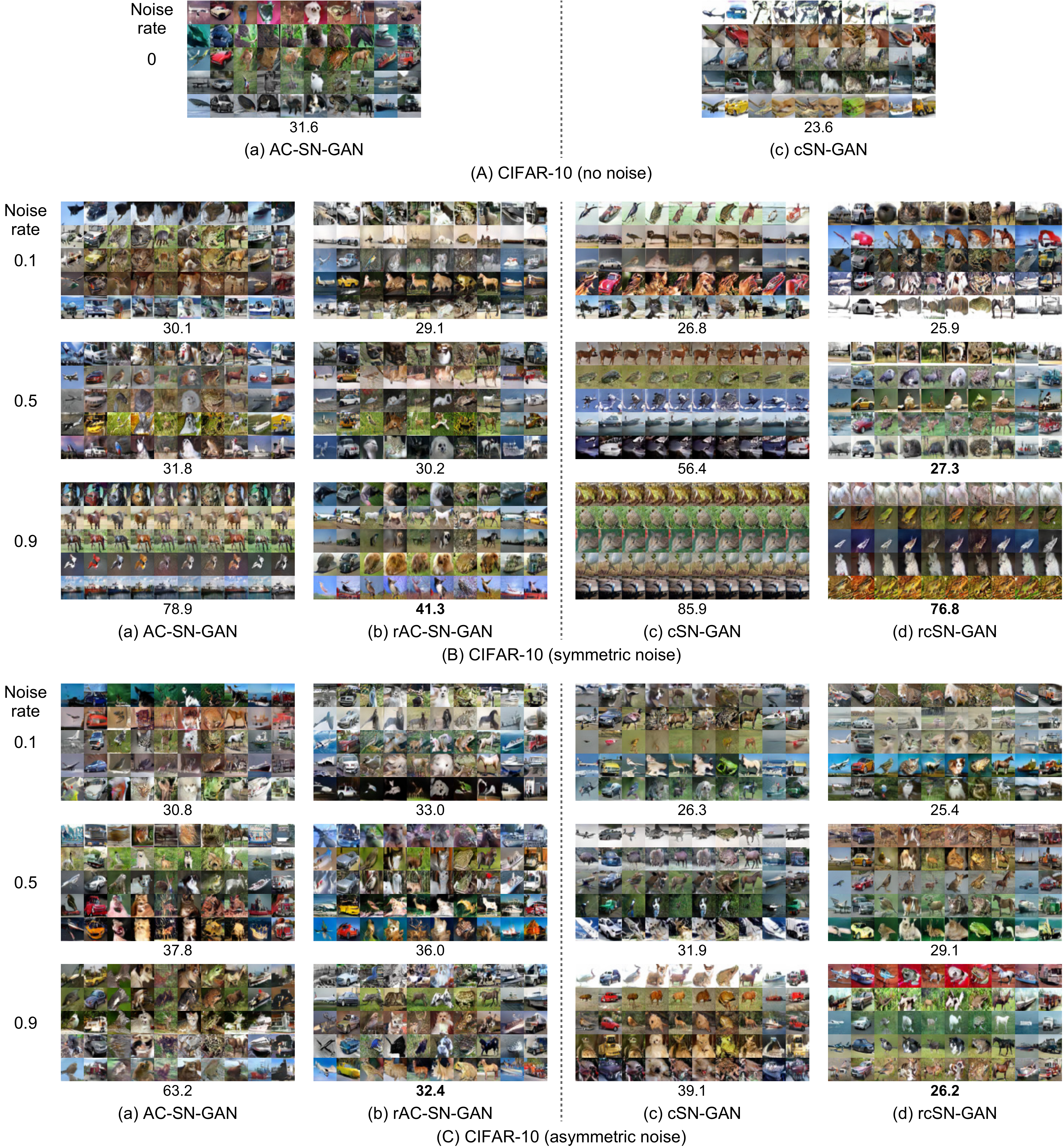}
  \caption{Image samples generated using (a) AC-SN-GAN, (b) rAC-SN-GAN, (c) cSN-GAN, and (d) rcSN-GAN on CIFAR-10 ((A) no noise, (B) symmetric noise, and (C) asymmetric noise). These models are discussed in Section~\ref{subsec:comp_eval}. In each picture block, each column shows samples associated with the same class. Each row includes samples generated from a fixed ${\bm z}$ and a varied $y^g$. The value below each picture block represents the achieved Intra FID (which is the same as the value reported in Figure~\ref{fig:comp_eval}). The smaller the value, the better. When the score difference between the baseline models (AC-SN-GAN and cSN-GAN) and the proposed models (rAC-SN-GAN and rcSN-GAN) is more than 3 points, we use bold font to indicate the better model.}
  \label{fig:sngan_gen}
\end{figure*}

\clearpage
\begin{table*}[hbt]
  \centering
  \scalebox{0.8}{
    \begin{tabular}{c|c|ccccc|ccccc}
      \bhline{1pt}
      \multirow{2}{*}{{Model}} & \multirow{2}{*}{{Metric}}
      & \multicolumn{5}{c|}{CIFAR-10 (symmetric noise)}
      & \multicolumn{5}{c}{CIFAR-10 (asymmetric noise)}
      \\ \cline{3-12}
                               &
      & 0.1 & 0.3 & 0.5 & 0.7 & 0.9
                  & 0.1 & 0.3 & 0.5 & 0.7 & 0.9      
      \\ \bhline{0.75pt}
                               & \multirow{2}{*}{FID $\downarrow$}
      & 10.9 & 11.4 & 11.3 & 11.5 & 13.0
                  & 10.8 & 10.2 & 10.2 & 10.4 & 11.0
      \\      
                               &
      & (11.4) & (13.4) & (14.0) & (13.5) & (14.3)
                  & (10.8) & (11.0) & (11.0) & (11.0) & (10.9)
      \\ \cline{2-12}      
                               & \multirow{2}{*}{Intra FID $\downarrow$}
      & 28.7 & {\bf 31.0} & {\bf 30.1} & {\bf 31.7} & {\bf 38.9}
                  & 28.5 & {\bf 27.4} & {\bf 31.2} & {\bf 35.0} & {\bf 36.8}
      \\
      rAC-CT-GAN with $T'$
                               &
      & (29.8) & (35.1) & (37.4) & (36.4) & (61.9)
                  & (28.3) & (30.7) & (35.4) & (45.7) & (62.6)
      \\ \cline{2-12}
      (AC-CT-GAN)
                               & \multirow{2}{*}{GAN-test $\uparrow$}
      & 95.3 & 93.2 & {\bf 92.0} & 87.7 & {\bf 70.4}
                  & 94.9 & 92.9 & {\bf 85.2} & {\bf 78.5} & {\bf 76.6}
      \\      
                               &
      & (94.7) & (91.7) & (88.9) & (86.7) & (40.9)
                  & (94.0) & (91.0) & (78.8) & (69.0) & (62.7)
      \\ \cline{2-12}      
                               & \multirow{2}{*}{GAN-train $\uparrow$}
      & 78.7 & {\bf 75.9} & {\bf 76.9} & {\bf 73.7} & {\bf 63.4}
                  & 79.8 & {\bf 79.5} & {\bf 74.0} & {\bf 69.1} & {\bf 67.3}
      \\
                               &
      & (78.1) & (72.0) & (70.7) & (67.9) & (34.5)
                  & (78.7) & (74.1) & (62.5) & (51.5) & (47.7)
      \\ \hline
                               & \multirow{2}{*}{FID $\downarrow$}
      & 10.7 & 11.9 & 12.4 & 12.1 & 15.0
                  & 10.8 & 10.8 & 11.0 & 10.9 & 11.3
      \\
                               &
      & (11.0) & (12.9) & (14.7) & (14.8) & (14.8)
                  & (11.2) & (10.9) & (11.4) & (10.7) & (11.0)
      \\ \cline{2-12}      
                               & \multirow{2}{*}{Intra FID $\downarrow$}
      & 25.5 & {\bf 29.4} & {\bf 29.4} & {\bf 29.7} & 87.4
                  & 25.7 & 26.0 & {\bf 28.7} & 32.6 & {\bf 33.9}
      \\
      rcSN-GAN with $T'$
                               &
      & (26.8) & (38.5) & (56.4) & (72.0) & (85.9)
                  & (26.3) & (28.2) & (31.9) & (33.7) & (39.1)
      \\ \cline{2-12}
      (cSN-GAN)
                               & \multirow{2}{*}{GAN-test $\uparrow$}
      & {\bf 85.3} & {\bf 79.0} & {\bf 84.8} & {\bf 82.8} & 15.9
                  & 86.6 & {\bf 87.2} & {\bf 84.0} & {\bf 74.9} & {\bf 71.2}
      \\
                               &
      & (81.1) & (60.2) & (38.5) & (23.2) & (13.1)
                  & (85.1) & (77.3) & (70.8) & (66.3) & (59.5)
      \\ \cline{2-12}
                               & \multirow{2}{*}{GAN-train $\uparrow$}
      & 80.7 & {\bf 78.1} & {\bf 77.4} & {\bf 75.6} & 15.0
                  & 80.5 & {\bf 79.0} & {\bf 75.7} & {\bf 69.3} & {\bf 65.7}
      \\
                               &
      & (79.5) & (69.2) & (45.5) & (28.5) & (14.5)
                  & (80.4) & (73.5) & (65.9) & (59.8) & (51.0)
      \\ \bhline{1pt}
      \multicolumn{12}{c}{} \vspace{-2mm}
      \\
      \multicolumn{12}{c}{(a) CIFAR-10}
      \\ \multicolumn{12}{c}{}
      \\ \bhline{1pt}
      \multirow{2}{*}{{Model}} & \multirow{2}{*}{{Metric}}
      & \multicolumn{5}{c|}{CIFAR-100 (symmetric noise)}
      & \multicolumn{5}{c}{CIFAR-100 (asymmetric noise)}
      \\ \cline{3-12}
                               &
      & 0.1 & 0.3 & 0.5 & 0.7 & 0.9
                  & 0.1 & 0.3 & 0.5 & 0.7 & 0.9
      \\ \bhline{0.75pt}
                               & \multirow{2}{*}{FID $\downarrow$}
      & 19.7 & 19.3 & 17.7 & 17.3 & 18.5
                  & 19.4 & 19.3 & 19.7 & 18.8 & 19.0
      \\
                               &
      & (19.2) & (19.1) & (18.7) & (18.0) & (18.0)
                  & (18.9) & (18.5) & (19.2) & (19.6) & (19.3)
      \\ \cline{2-12}      
      rAC-CT-GAN with $T'$
                               & \multirow{2}{*}{GAN-test $\uparrow$}
      & {\bf 76.6} & 67.1 & {\bf 68.1} & \textit{1.0} & \textit{2.5}
                  & 74.1 & 68.9 & \textit{28.7} & 7.2 & 2.2
      \\
      (AC-CT-GAN)
                               &
      & (72.4) & (65.0) & (63.1) & (48.0) & (9.1)
                  & (75.5) & (68.4) & (34.4) & (8.7) & (3.8)
      \\ \cline{2-12}      
                               & \multirow{2}{*}{GAN-train $\uparrow$}
      & 21.2 & 21.4 & 23.3 & \textit{1.0} & 2.3
                  & 19.1 & 19.9 & 10.7 & 5.5 & 3.9
      \\
                               &
      & (21.7) & (22.8) & (21.7) & (19.3) & (5.1)
                  & (21.4) & (20.8) & (12.2) & (5.8) & (4.0)
      \\ \hline
                               & \multirow{2}{*}{FID $\downarrow$}
      & 14.3 & 16.6 & 17.5 & 20.0 & 19.8
                  & 13.8 & 14.1 & 14.7 & 14.7 & 13.9
      \\      
                               &
      & (14.2) & (16.9) & (18.9) & (19.4) & (18.7)
                  & (13.3) & (14.2) & (14.6) & (14.4) & (13.5)
      \\ \cline{2-12}
      rcSN-GAN with $T'$
                               & \multirow{2}{*}{GAN-test $\uparrow$}
      & 53.4 & 36.6 & {\bf 37.7} & \textit{1.0} & 1.7
                  & {\bf 65.0} & {\bf 63.0} & {\bf 32.4} & \textit{7.8} & 3.8
      \\
      (rcSN-GAN)
                               &
      & (54.3) & (33.9) & (13.9) & (5.9) & (1.9)
                  & (56.1) & (41.8) & (27.5) & (15.6) & (5.4)
      \\ \cline{2-12}
                               & \multirow{2}{*}{GAN-train $\uparrow$}
      & 40.1 & 32.8 & {\bf 31.3} & \textit{1.0} & 1.8
                  & 41.7 & {\bf 39.3} & 20.1 & \textit{6.1} & 3.9
      \\
                               &
      & (39.7) & (33.2) & (16.9) & (7.7) & (1.9)
                  & (41.7) & (33.3) & (20.7) & (11.1) & (4.8)
      \\ \bhline{1pt}
      \multicolumn{12}{c}{} \vspace{-2mm}
      \\
      \multicolumn{12}{c}{(b) CIFAR-100}
      \\
    \end{tabular}
  }
  \vspace{2mm}
  \caption{Extended version of Table~\ref{tab:estT_eval}. Quantitative results using the estimated $T'$. These results are discussed in Section~\ref{subsec:estT_eval}. In each table, the second row indicates a noise rate $\mu \in \{ 0.1, 0.3, 0.5, 0.7, 0.9 \}$. Under the third row, each odd row contains the scores for the proposed models (i.e., rAC-CT-GAN or rcSN-GAN) with $T'$ and each even row (denoted in parenthesis) includes the scores for the baseline models (i.e., AC-CT-GAN or cSN-GAN). Bold and italic fonts indicate that the score for the proposed models is better or worse by more than 3 points than that for the baseline models, respectively. See also Figure~\ref{fig:estT_eval_ex} that visualizes this information as graphs.}
  \label{tab:estT_eval_ex}
\end{table*}

\begin{figure*}[t]
  \centering
  \includegraphics[width=0.85\textwidth]{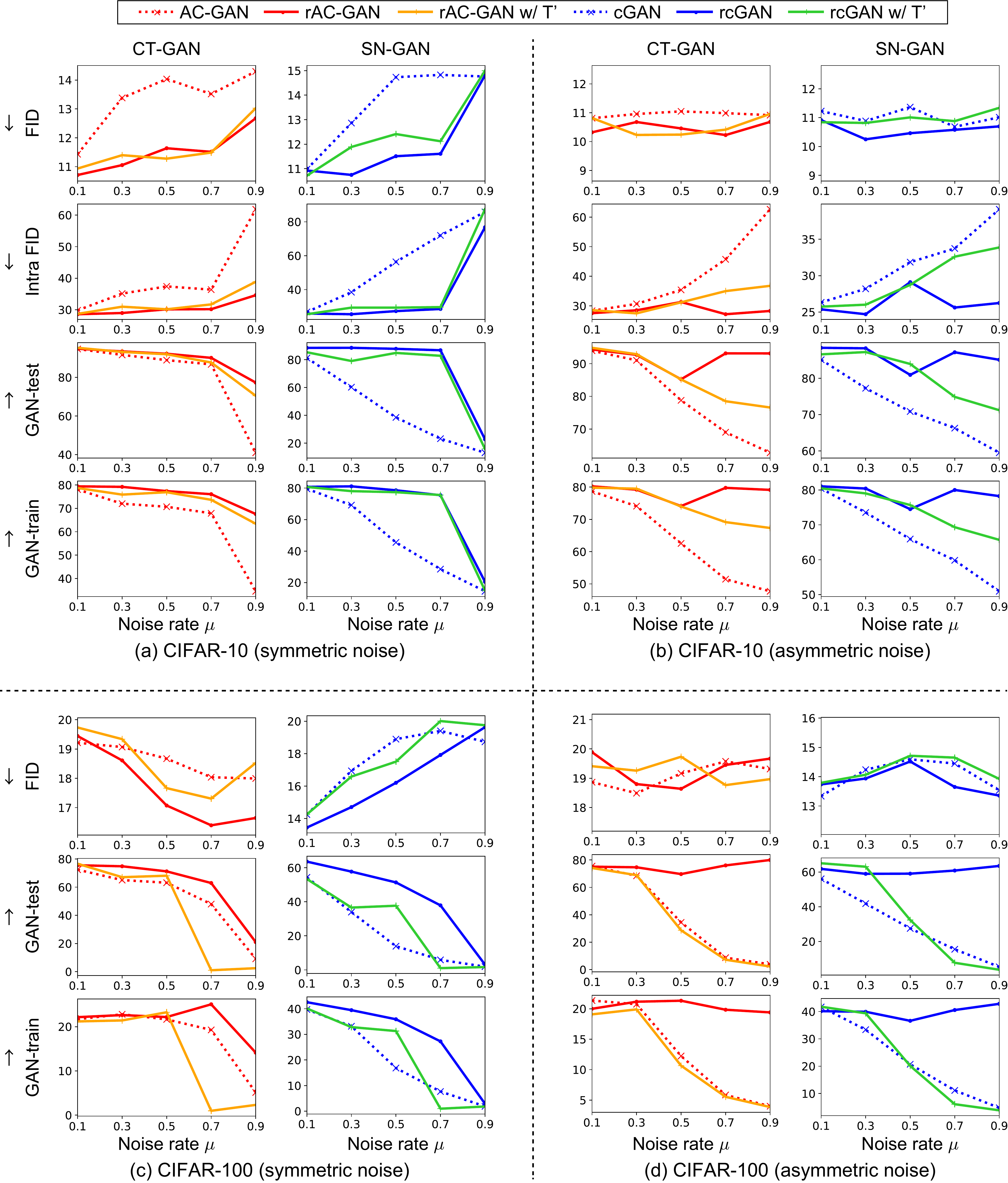}
  \caption{Visualization of Table~\ref{tab:estT_eval_ex}. Comparison of the quantitative results using the baseline models (AC-CT-GAN and cSN-GAN), the proposed models (rAC-CT-GAN and rcSN-GAN) with the known $T$, and the proposed models with the estimated $T'$. The scale is adjusted on each graph for easy viewing. As discussed in Section~\ref{subsec:estT_eval}, in CIFAR-10, even using $T'$, rAC-CT-GAN and rcSN-GAN outperform conventional AC-CT-GAN and cSN-GAN, respectively, and show robustness to label noise. Furthermore, rAC-CT-GAN and rcSN-GAN with $T$ and those with $T'$ are almost similar except for the asymmetric noise with a higher noise rate (i.e., 0.7 and 0.9). In CIFAR-100, when the noise rate is low, rAC-CT-GAN and rcSN-GAN work moderately well; however, in highly noisy settings, their performance is degraded. This implies the limitation of estimating $T'$ from the data in which there is a high-rate mixture and there is a limited number of images per class (500). This is also mentioned in the previous study~\cite{GPatriniCVPR2017}. Further improvement remains as an open issue. The precise values for this figure are provided in Table~\ref{tab:estT_eval_ex}.}
  \label{fig:estT_eval_ex}
\end{figure*}

\begin{figure*}[t]
  \centering
  \includegraphics[width=\textwidth]{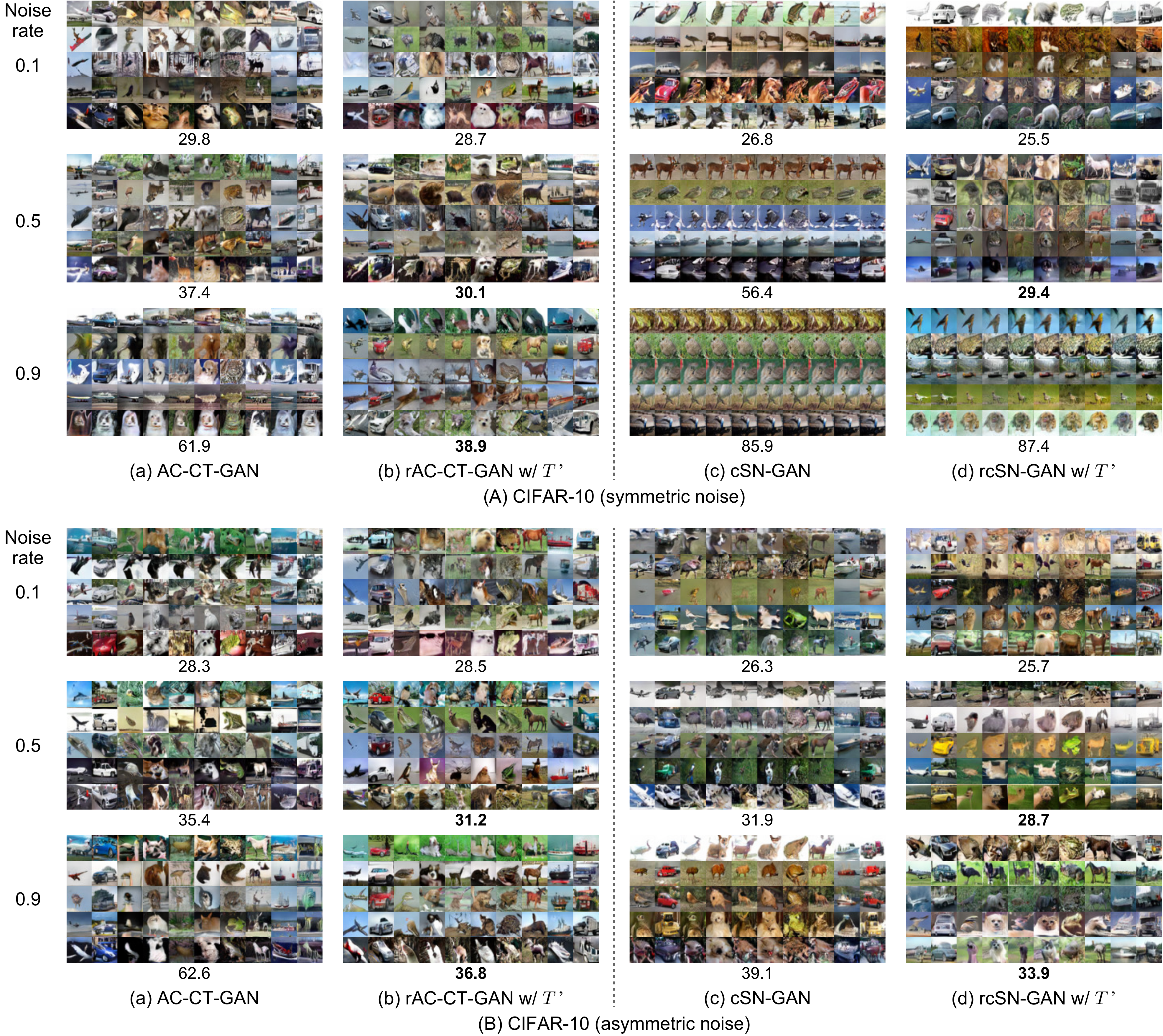}
  \caption{Image samples generated using (a) AC-CT-GAN, (b) rAC-CT-GAN with the estimated $T'$, (c) cSN-GAN, and (d) rcSN-GAN with the estimated $T'$ on CIFAR-10 ((A) symmetric noise and (B) asymmetric noise). These models are discussed in Section~\ref{subsec:estT_eval}. In each picture block, each column shows samples associated with the same class. Each row includes samples generated from a fixed ${\bm z}$ and a varied $y^g$. The value below each picture block represents the achieved Intra FID (which is the same as the value reported in Tables~\ref{tab:estT_eval} and \ref{tab:estT_eval_ex}). When the score difference between the baseline models ((a) AC-CT-GAN and (c) cSN-GAN) and the proposed models ((b) rAC-CT-GAN with $T'$ and (d) rcSN-GAN with $T'$) is more than 3 points, we use bold font to indicate the better model. Refer to Figures~\ref{fig:ctgan_gen} and \ref{fig:sngan_gen} for comparison with rAC-CT-GAN with the known $T$ and rcSN-GAN with the known $T$, respectively.}  
  \label{fig:estT_gen}
\end{figure*}

\clearpage
\begin{table*}
  \centering
  \scalebox{0.8}{
    \begin{tabular}{c|c|cccc|cccc}
      \bhline{1pt}
      \multirow{2}{*}{Model}
      & \multirow{2}{*}{Metric}
      & \multicolumn{4}{c|}{CIFAR-10 (symmetric noise)}
      & \multicolumn{4}{c}{CIFAR-100 (symmetric noise)}
      \\ \cline{3-10}
      & & A & B & C & D
            & A & B & C & D
      \\ \bhline{0.75pt}
      & \multirow{2}{*}{FID $\downarrow$}
      & 27.9 & {\bf 14.7} & 12.4 & 13.5
            & 33.1 & {\bf 20.4} & 17.2 & 18.4
      \\
      &
      & (28.7) & (38.8) & (12.7) & (13.5)
            & (34.2) & (41.1) & (16.6) & (18.6)
      \\ \cline{2-10}      
      & \multirow{2}{*}{Intra FID $\downarrow$}
      & {\bf 55.7} & {\bf 34.6} & 33.4 & {\bf 36.9}
            & -- & -- & -- & --
      \\
      Improved rAC-GAN
      &
      & (58.9) & (92.4) & (34.6) & (41.3)
            & -- & -- & -- & --
      \\ \cline{2-10}
      (rAC-GAN)
      & \multirow{2}{*}{GAN-test $\uparrow$}
      & 65.1 & {\bf 77.7} & 78.2 & {\bf 63.5}
            & 26.2 & {\bf 22.5} & 21.5 & {\bf 15.4}
      \\
      &
      & (62.7) & (27.1) & (77.3) & (59.6)
            & (27.2) & (1.0) & (21.0) & (7.9)
      \\ \cline{2-10}
      & \multirow{2}{*}{GAN-train $\uparrow$}
      & 59.9 & {\bf 70.8} & 69.1 & {\bf 59.7}
            & 17.1 & {\bf 16.3} & 14.8 & {\bf 11.7}
      \\
      &
      & (58.7) & (26.3) & (67.6) & (51.9)
            & (17.4) & (1.0) & (14.1) & (6.9)
      \\ \hline
      & \multirow{2}{*}{FID $\downarrow$}
      & {\bf 30.4} & {\bf 16.9} & 14.2 & 14.9
            & \textit{50.2} & {\bf 25.8} & 18.0 & 18.7
      \\
      &
      & (35.4) & (22.2) & (14.8) & (14.8)
            & (40.1) & (31.2) & (17.5) & (19.6)
      \\ \cline{2-10}      
      & \multirow{2}{*}{Intra FID $\downarrow$}
      & {\bf 76.9} & {\bf 39.6} & {\bf 52.9} & {\bf 48.2}        
            & -- & -- & -- & --
      \\
      Improved rcGAN
      &
      & (89.2) & (51.5) & (63.2) & (76.8)
            & -- & -- & -- & --
      \\ \cline{2-10}
      (rcGAN)
      & \multirow{2}{*}{GAN-test $\uparrow$}
      & {\bf 27.3} & {\bf 65.7} & {\bf 38.9} & {\bf 48.8}
            & {\bf 4.5} & 12.0 & {\bf 9.5} & {\bf 6.1}
      \\
      &
      & (18.9) & (58.4) & (31.2) & (22.8)
            & (1.5) & (9.2) & (5.3) & (3.1)
      \\ \cline{2-10}
      & \multirow{2}{*}{GAN-train $\uparrow$}
      & {\bf 31.9} & {\bf 60.7} & {\bf 36.7} & {\bf 47.3}
            & {\bf 6.0} & 10.3 & 7.5 & 4.4
      \\
      &
      & (25.3) & (48.7) & (30.6) & (20.5)
            & (1.7) & (8.3) & (4.7) & (2.9)
      \\ \bhline{1pt}
    \end{tabular}
  }
  \vspace{2mm}
  \caption{Extended version of Table~\ref{tab:imp_eval}. Quantitative results using the improved technique. These results are discussed in Section~\ref{subsec:imp_eval}. In the second row, A, B, C, and D indicate DCGAN, WGAN-GP, CT-GAN, and SN-GAN, respectively. We evaluated the models in severely noisy settings (i.e., symmetric noise with a noise rate 0.9). Under the third row, each odd row contains the scores for the improved rAC-GAN or improved rcGAN and each even row (denoted in parenthesis) contains the scores for the naive rAC-GAN or naive rcGAN. Bold and italic fonts indicate that the score for the improved models is better or worse by more than 3 points than that for the naive models, respectively.}
  \label{tab:imp_eval_ex}
\end{table*}

\begin{figure*}[t]
  \centering
  \includegraphics[width=\textwidth]{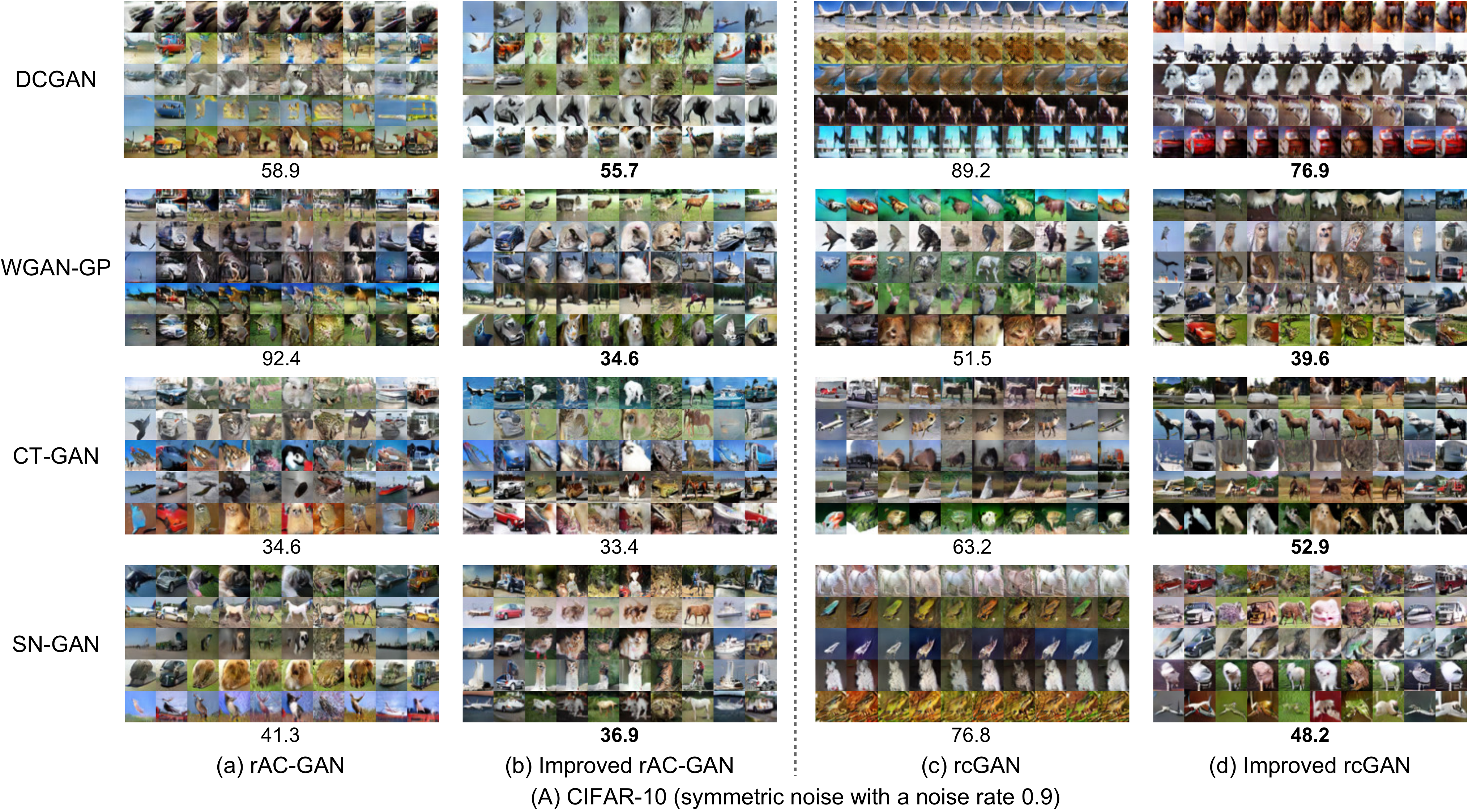}
  \caption{Image samples generated using (a) rAC-GAN, (b) improved rAC-GAN, (c) rcGAN, and (d) improved rcGAN on CIFAR-10 in severely noisy settings (i.e., symmetric noise with a noise rate 0.9). These models are discussed in Section~\ref{subsec:imp_eval}. In each picture block, each column shows samples associated with the same class. Each row includes samples generated from a fixed ${\bm z}$ and a varied $y^g$. The value below each picture block represents the achieved Intra FID (which is the same as the value reported in Tables~\ref{tab:imp_eval} and \ref{tab:imp_eval_ex}). When the score difference between the naive models ((a) rAC-GAN and (c) rcGAN) and the improved models ((b) improved rAC-GAN and (d) improved rcGAN) is more than 3 points, we use bold font to indicate the better model.}
  \label{fig:imp_gen}
\end{figure*}

\clearpage
\begin{figure*}[t]
  \centering
  \includegraphics[width=\textwidth]{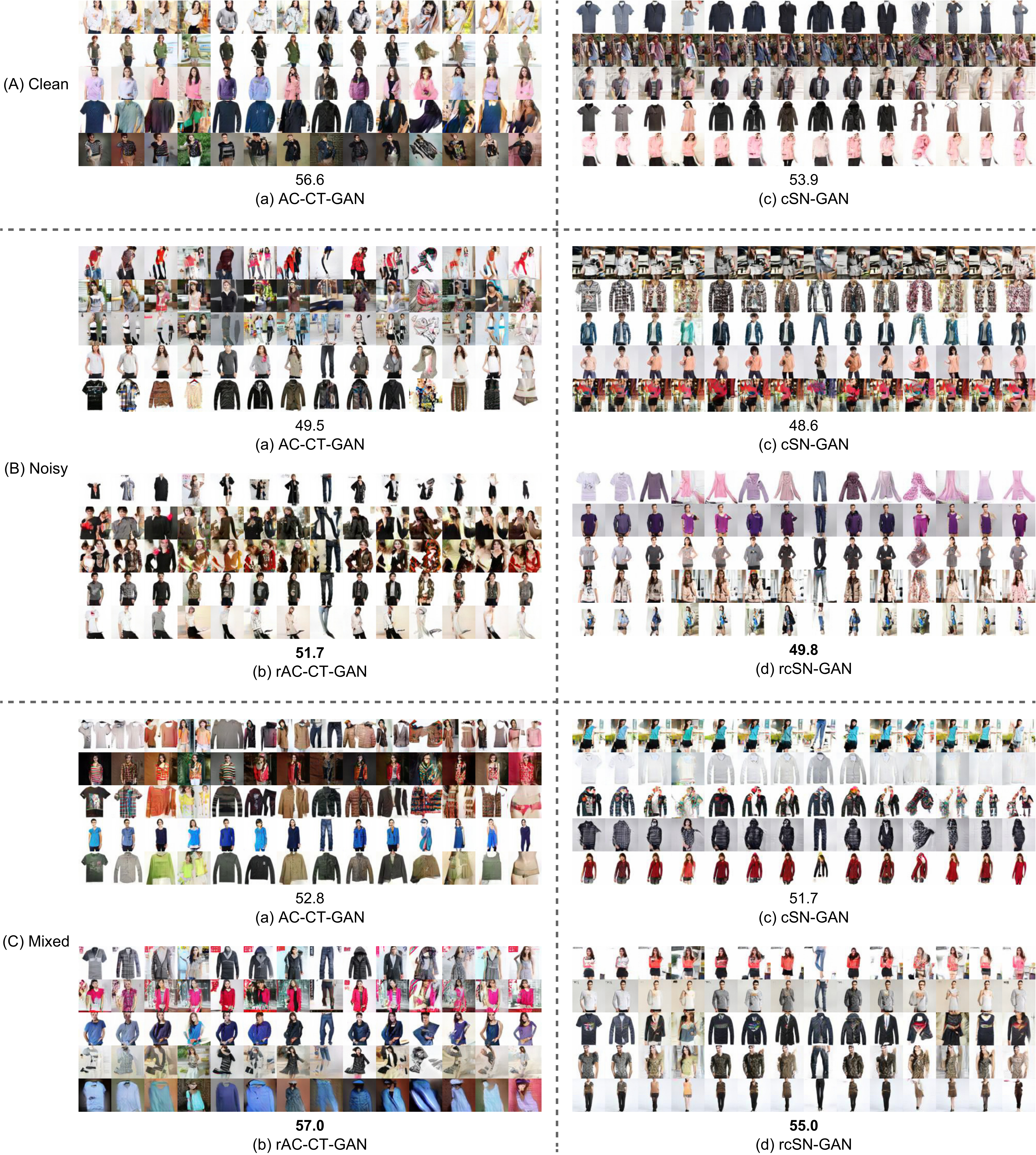}
  \caption{Image samples generated using (a) AC-CT-GAN, (b) rAC-CT-GAN, (c) cSN-GAN, and (d) rcSN-GAN on Clothing1M ((A) clean, (B) noisy, and (C) mixed settings). These models are discussed in Section~\ref{subsec:clothing1m_eval}. In each picture block, each column shows samples belonging to the same class. From left to right, each column represents t-shirt, shirt, knitwear, chiffon, sweater, hoodie, windbreaker, jacket, down coat, suit, shawl, dress, vest, and underwear, respectively. Each row includes samples generated from a fixed ${\bm z}$ and a varied $y^g$. The value below each picture block represents the achieved GAN-train (which is the same as the value reported in Table~\ref{tab:clothing1m_eval}). The larger the value, the better. Bold font indicates a better score in each block. Note that this dataset is challenging (annotation accuracy is only 61.54\%~\cite{TXiaoCVPR2015}) and correct labeling is also difficult for humans.}
  \label{fig:clothing1m_gen}
\end{figure*}

\clearpage
\section{Additional analysis}
\label{sec:ana}

\subsection{Effect of gap between real and model $T$}
\label{subsec:gap_ana}

In Section~\ref{subsec:estT_eval}, we evaluated the models with the estimated $T'$ and examined the effect when there is a gap between the real $T$ and model $T$ (particularly, $T'$ in this case). To further investigate such an effect, we conducted an additional experiment. In the following, to clarify the difference, we denote the real $T$ and model $T$ by $T^r$ and $T^g$, respectively, and their corresponding noise rates by $\mu^r$ and $\mu^g$, respectively. In Section~\ref{subsec:comp_eval}, we examined the performance change when $\mu^r$ and $\mu^g$ are varied at the same time (i.e., $\mu^r = \mu^g$). In contrast, in this section, to inspect the effect of the gap between $T^r$ and $T^g$, we fixed $\mu^r$ as a constant value ($\mu^r = 0$ or $\mu^r = 0.5$) and  investigated the performance change when $\mu^g$ is varied ($\mu^g \in \{ 0, 0.1, 0.3, 0.5, 0.7, 0.9 \}$). Figures~\ref{fig:gap0_eval} and \ref{fig:gap05_eval} show the results for noise rates $\mu^r = 0$ and $\mu^r = 0.5$, respectively. Although there is a dependency on the models, datasets, and evaluation metrics, we find that the quantity degradation is relatively small when the gap between $\mu^r$ and $\mu^g$ is within $\pm 0.2$. However, in this situation, the theoretical guarantees supported by Theorems~\ref{th:racgan} and \ref{th:rcgan} do not hold, and we admit that there is room to explore these observations theoretically in future work.

\subsection{Effect of learning rate}
\label{subsec:lr_ana}

Recent studies (e.g., \cite{DTanakaCVPR2018}) show that a high learning rate is useful for preventing a classifier DNN from memorizing noisy labels. To explore such an effect on conditional generative models, we performed a comparative study using the models with different learning rates. In particular, we evaluated the baseline models (i.e., AC-CT-GAN and cSN-GAN) and the proposed models (i.e., rAC-CT-GAN and rcSN-GAN) in severely noisy settings (i.e., symmetric noise with a noise rate 0.9). We selected the initial learning rate $\alpha$ from 0.0001, 0.0002, 0.0004, and 0.0008. As described in Appendix~\ref{sec:comp_eval_detail}, the default parameter of $\alpha$ is 0.0002. We trained the models for $100k$ generator iterations and decayed $\alpha$ to 0 over $100k$ iterations in all settings.

\smallskip\noindent\textbf{Results.}
We display the results in Figure~\ref{fig:lr_eval}. As discussed in the previous studies, generally the GAN training itself is not stable and has sensitivity to the learning rate (e.g., the authors of DCGAN~\cite{ARadfordICLR2016} recommended a low learning rate). Therefore, the relationship between the model and the learning rate in GANs  might be more difficult to explain than that in the classifier DNNs. However, we observed two tendencies through the experiments:
\begin{itemize}
  \setlength{\parskip}{1pt}
  \setlength{\itemsep}{1pt}
\item As the learning rate increases (particularly ranged from 0.0001 to 0.0004), the quantitative scores tend to become better in the proposed models; however, such benefits are small in the baseline models (particularly in cSN-GAN). We argue that this is because our proposed models can employ noisy labels as useful conditional information and this allows for suppressing the training instability resulting from a high learning rate.
\item However, when using an extensively high learning rate (e.g., 0.0008), the scores degrade even when using the proposed models (particularly when using rAC-CT-GAN). This implies the necessity of a careful parameter tuning.
\end{itemize}
As per the latest studies (e.g.,~\cite{IGulrajaniNIPS2017,TMiyatoICLR2018b}), the dependency on hyperparameter settings is being improved, and we expect that a more label-noise robust model will be constructed along with the advances in GANs.

\subsection{Effect of batch size}
\label{subsec:bs_ana}

Another important factor with regard to the training is the batch size. In particular, it might be critical in noisy label settings because, as the batch size becomes small, the factors for distinguishing between right and wrong labels also become fewer. To investigate this effect, we conducted a comparative study using the models with different batch sizes. We selected the batch sizes from 32, 64, and 128. As described in Appendix~\ref{sec:comp_eval_detail}, the default batch size is 64. In this analysis, we set the learning rate $\alpha$ to 0.0002 (default).

\smallskip\noindent\textbf{Results.}
We show the results in Figure~\ref{fig:bs_eval}. As was the case with the learning rate, the batch size affects the GAN training itself. Therefore, it is not easy to explain precisely the relationship between the model and the batch size. However, we observed a similar tendency to that of the learning rate, i.e., the proposed models benefit from an increasing batch size, whereas such benefits are small in the baseline models. The latest study~\cite{ABrockArXiv2018} demonstrates that, by incorporating some techniques, it is possible to obtain GAN training stability even when using a large batch size (e.g., a batch size of 2048). We expect that the performance of rAC-GAN and rcGAN will be improved along with such advances.

\subsection{Distance to noisy labeled data}
\label{subsec:intra_fid_noise_ana}

In the main text, we used Intra FID to measure the distance between the generated data distribution and the \textit{clean} labeled data distribution. Another interesting metric is the distance between the generated data distribution and the \textit{noisy} labeled data distribution. To assess it, we computed Intra FID between the samples generated by $G$ and the real samples belonging to the class of concern in terms of \textit{noisy} labels. We show the results in Figure~\ref{fig:intra_fid_noise}.\footnote{We calculated these scores only for CIFAR-10 with symmetric noise because in the other settings the number of noisy labeled data for each class is insufficient to use this metric.} These results support the finding, discussed in the last paragraph in Section~\ref{subsec:comp_eval}, i.e., cGAN can fit even noisy labels, and AC-GAN shows robustness for symmetric noise. We found that these tendencies occur independently of the GAN configurations.

\clearpage
\begin{figure*}[t]
  \centering
  \includegraphics[width=0.85\textwidth]{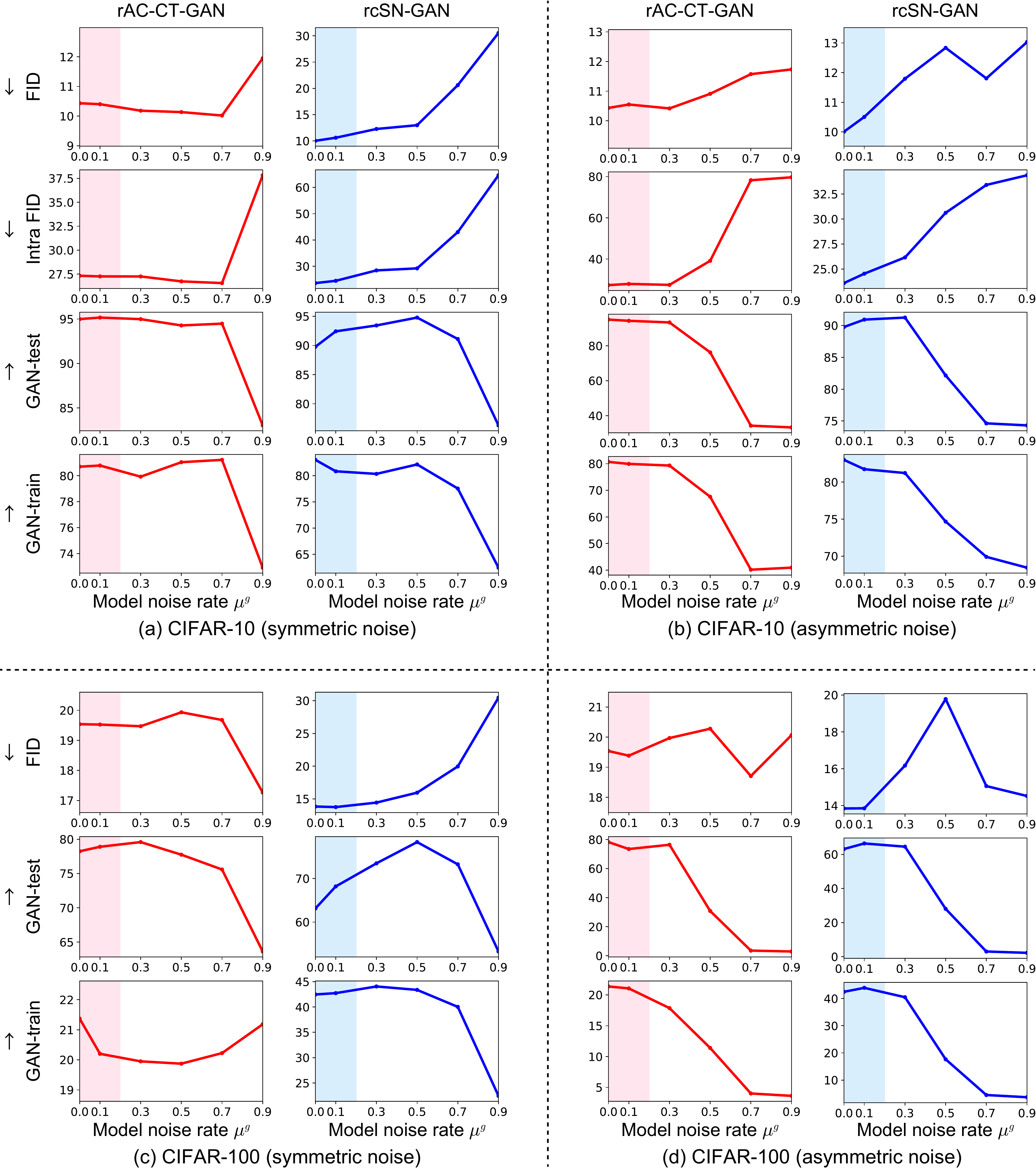}
  \caption{Effect of the gap between real and model $T$. We evaluated rAC-CT-GAN and rcSN-GAN on CIFAR-10 and CIFAR-100 in symmetric and asymmetric noise settings. We fixed a real noise rate as $\mu^r = 0$ and varied a model noise rate $\mu^g$ in $\{ 0, 0.1, 0.3, 0.5, 0.7, 0.9 \}$. The colored area indicates that the gap is within $\pm 0.2$. Note that the scale is adjusted on each graph for easy viewing.}
  \label{fig:gap0_eval}
\end{figure*}

\clearpage
\begin{figure*}[t]
  \centering
  \includegraphics[width=0.85\textwidth]{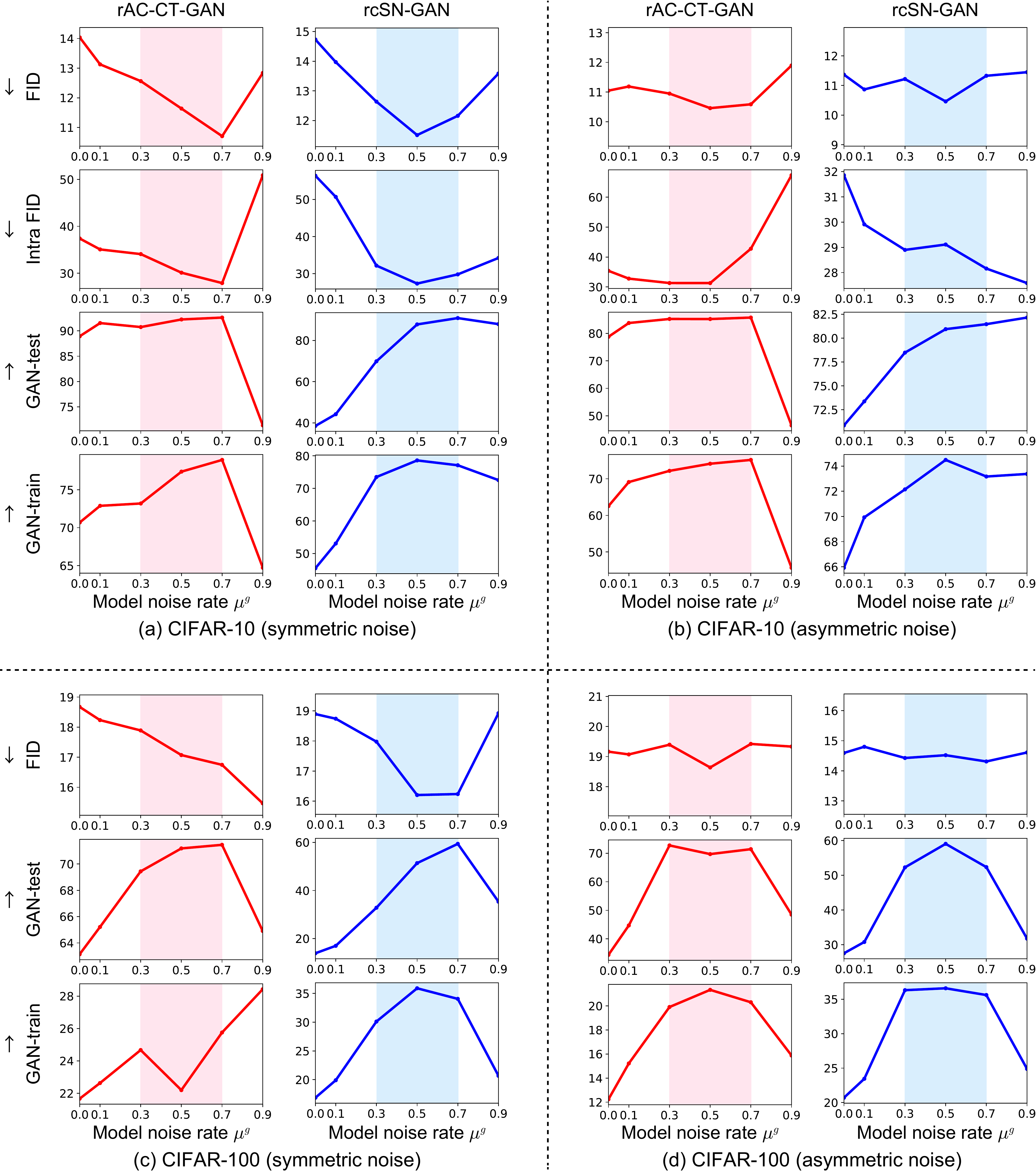}
  \caption{Effect of the gap between real and model $T$. We evaluated rAC-CT-GAN and rcSN-GAN on CIFAR-10 and CIFAR-100 in symmetric and asymmetric noise settings. We fixed a real noise rate as $\mu^r = 0.5$ and varied a model noise rate $\mu^g$ in $\{ 0, 0.1, 0.3, 0.5, 0.7, 0.9 \}$. The colored area indicates that the gap is within $\pm 0.2$. Note that the scale is adjusted on each graph for easy viewing.}
  \label{fig:gap05_eval}
\end{figure*}

\clearpage
\begin{figure}[t]
  \centering
  \includegraphics[width=0.9\columnwidth]{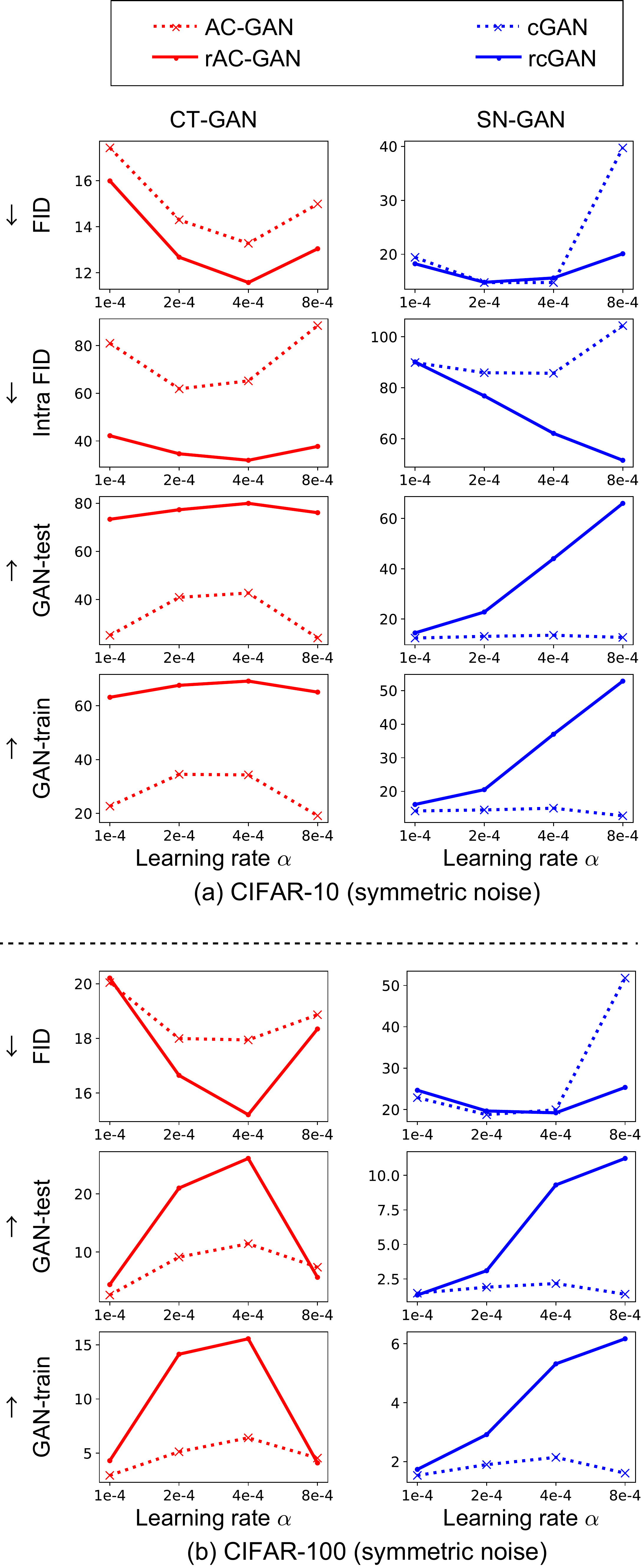}
  \caption{Comparison of the models with learning rates of 0.0001, 0.0002 (default), 0.0004, and 0.0008. We evaluated AC-CT-GAN, rAC-CT-GAN, cSN-GAN, and rcSN-GAN on CIFAR-10 and CIFAR-100 in severely noisy settings (i.e., symmetric noise with a noise rate 0.9). We fixed the batch size as 64 (default). Note that the scale is adjusted on each graph for easy viewing.}
  \label{fig:lr_eval}
\end{figure}

\begin{figure}[t]
  \centering
  \includegraphics[width=0.9\columnwidth]{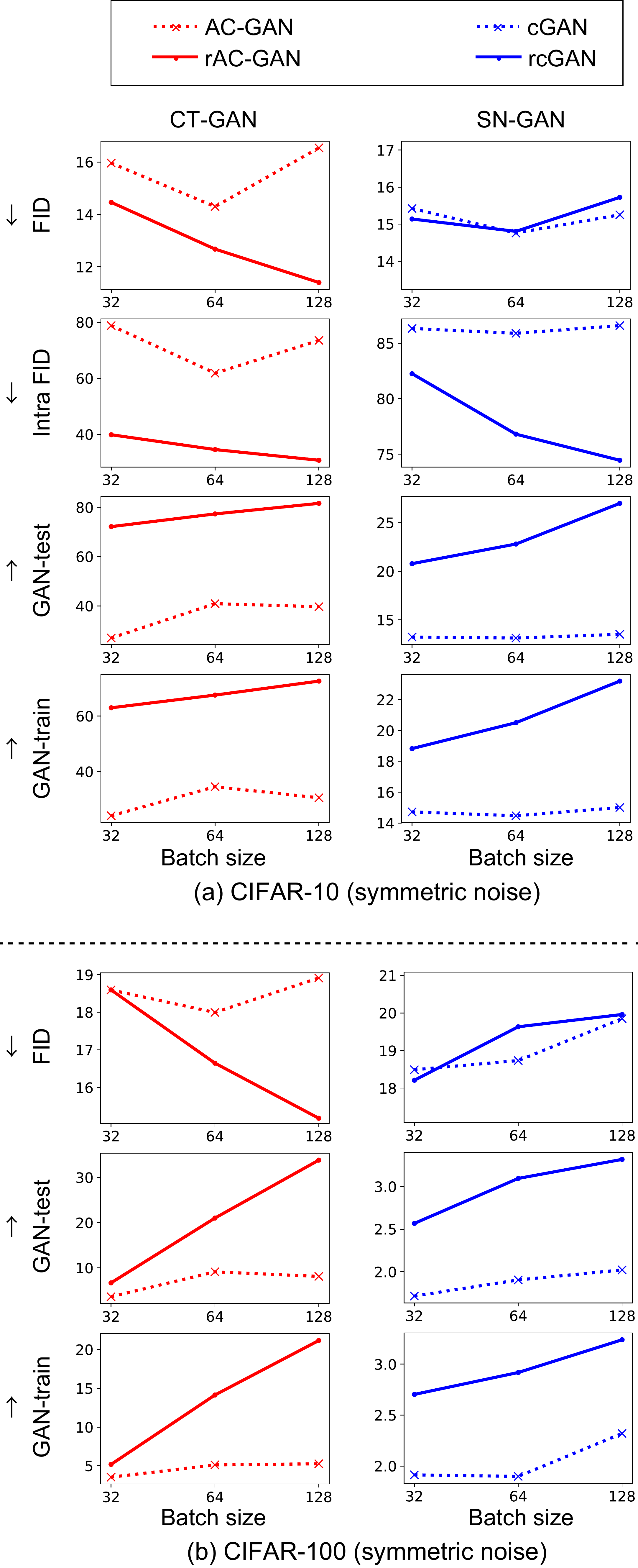}
  \caption{Comparison of the models with batch sizes of 32, 64 (default), and 128. We evaluated AC-CT-GAN, rAC-CT-GAN, cSN-GAN, and rcSN-GAN on CIFAR-10 and CIFAR-100 in severely noisy settings (i.e., symmetric noise with a noise rate 0.9). We fixed the learning rate $\alpha$ as 0.0002 (default). Note that the scale is adjusted on each graph for easy viewing.}
  \label{fig:bs_eval}
\end{figure}

\begin{figure*}[t]
  \centering
  \includegraphics[width=0.85\textwidth]{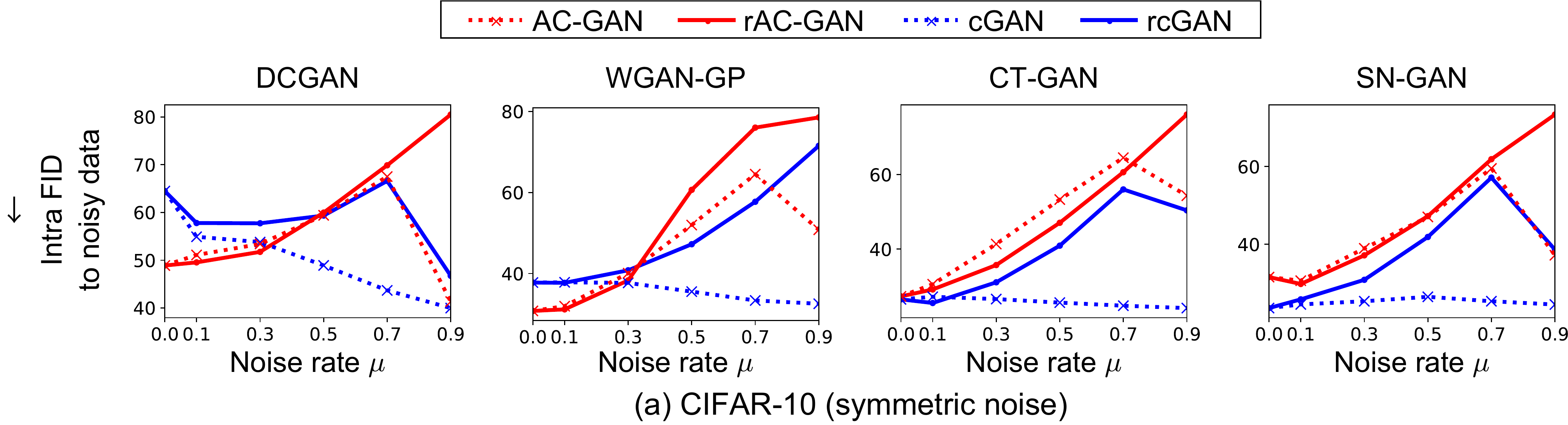}
  \caption{Intra FID between the real noisy data distribution and the generated data distribution. We evaluated AC-GAN, rAC-GAN, cGAN, and rcGAN with combinations of DCGAN, WGAN-GP, CT-GAN, and SN-GAN on CIFAR-10 with symmetric noise. Note that the scale is adjusted on each graph for easy viewing.}
  \label{fig:intra_fid_noise}
\end{figure*}

\clearpage
\section{Details on Section~\ref{subsec:comp_eval}}
\label{sec:comp_eval_detail}

\subsection{Network architectures and training settings}
\label{subsec:cifar_net}

In the experiments on CIFAR-10 and CIFAR-100 (Section~\ref{subsec:comp_eval}--\ref{subsec:imp_eval}), we tested four GAN configurations: DCGAN~\cite{ARadfordICLR2016}, WGAN-GP~\cite{IGulrajaniNIPS2017}, CT-GAN~\cite{XWeiICLR2018}, and SN-GAN~\cite{TMiyatoICLR2018b}. As discussed in Section~\ref{subsec:comp_eval}, instead of extensively searching for the best parameters for each label-noise setting, we tested them with the default parameters that are commonly used in clean label settings, and investigated the label-noise effect for them. We explain each one below.

\medskip\noindent\textbf{Notation.}
In the description of network architectures, we use the following notation.
\begin{itemize}
  \setlength{\parskip}{1pt}
  \setlength{\itemsep}{1pt}
\item FC:
  Fully connected layer
\item Conv:
  Convolutional layer
\item Deconv:
  Deconvolutional (i.e., fractionally strided convolutional) layer
\item BN:
  Batch normalization~\cite{SIoffeICML2015}
\item ReLU:
  Rectified unit~\cite{VNairICML2010}
\item LReLU:
  Leaky rectified unit~\cite{AMaasICML2013,BXuICMLW2015}
\item ResBlock:
  Residual block~\cite{KHeCVPR2016}
\item Concat($y$):
  Concatenating $y$ ($\in \{ 1, \dots, c \}$) after converting it to a one-hot vector ($\in \mathbb{R}^{c}$) and reshaping it to adjust feature size
\item Proj(Embed($y$)):
  Embedding $y$ such that its dimension becomes the same as of the previous layer ${\bm h}$ and taking an inner product between embedded $y$ and ${\bm h}$
\end{itemize}

In the description of training settings, we use the following notation. Note that we used the Adam optimizer~\cite{DPKingmaICLR2015} for all GAN training.
\begin{itemize}
  \setlength{\parskip}{1pt}
  \setlength{\itemsep}{1pt}
\item $\alpha$:
  Learning rate of Adam
\item $\beta_{1}$:
  The first order momentum parameter of Adam
\item $\beta_{2}$:
  The second order momentum parameter of Adam
\item $n_{D}$:
  The number of updates of $D$ per one update of $G$
\end{itemize}

\subsubsection{DCGAN}
\label{subsubsec:dcgan_detail}

DCGAN~\cite{ARadfordICLR2016} is a commonly used baseline model. The main principle of DCGAN is to compose the generator and discriminator using only convolutional layers along with batch normalization~\cite{SIoffeICML2015}. It shows promising results in image generation and unsupervised representation learning.

\medskip\noindent\textbf{Network architectures.}
We implemented standard CNN network architectures while referring to~\cite{TMiyatoICLR2018b,AOdenaICML2017}. We describe their details in Table~\ref{tab:net_cnn}. The conditional generators used in AC-GAN/rAC-GAN and cGAN/rcGAN are the same (Table~\ref{tab:net_cnn}(a)), while the discriminators are different. For AC-GAN/rAC-GAN, we used $D\mbox{/}C$ in which the layers are shared between $D$ and $C$ except for the last layer, following~\cite{AOdenaICML2017} (Table~\ref{tab:net_cnn}(b)).
For cGAN/rcGAN, we used the \textit{concat} discriminator~\cite{MMirzaArXiv2014} that employs the conditional information by concatenating the conditional vector to the feature vectors (Table~\ref{tab:net_cnn}(c)).

\medskip\noindent\textbf{Training settings.}
In DCGAN, a non-saturating loss~\cite{IGoodfellowNIPS2014} is used as a GAN objective function. We trained the networks for $100k$ iterations using Adam with $\alpha = 0.0002$, $\beta_{1} = 0.5$, $\beta_{2} = 0.999$, $n_{D} = 1$, and batch size of 64. In AC-GAN and rAC-GAN, we set the trade-off parameters $\lambda_{\rm AC}^r$ and $\lambda_{\rm AC}^g$ to 1.

\begin{table}[thb]
  \centering
  \scalebox{0.9}{
    \begin{tabular}{c}
      \bhline{1pt}
      (a) {\bf Conditional generator} $G({\bm z}, y)$
      \\ \bhline{0.75pt}
      ${\bm z} \in \mathbb{R}^{128} \sim {\cal N}(0, I)$,
      Concat($y$)
      \\ \hline
      FC $\rightarrow$ $4 \times 4 \times 512$, BN, ReLU
      \\ \hline
      $4 \times 4$, stride=2 Deconv 256, BN, ReLU
      \\ \hline
      $4 \times 4$, stride=2 Deconv 128, BN, ReLU
      \\ \hline
      $4 \times 4$, stride=2 Deconv 64, BN, ReLU
      \\ \hline
      $3 \times 3$, stride=1 Conv 3, Tanh      
      \\ \bhline{1pt}
      \\ \bhline{1pt}
      (b) {\bf AC-GAN/rAC-GAN discriminator} $D({\bm x})\mbox{/}C({\bm x})$
      \\ \bhline{0.75pt}
      RGB image ${\bm x} \in \mathbb{R}^{32 \times 32 \times 3}$
      \\ \hline
      $3 \times 3$, stride=1 Conv 64, BN, LReLU
      \\
      $4 \times 4$, stride=2 Conv 64, BN, LReLU
      \\ \hline
      $3 \times 3$, stride=1 Conv 128, BN, LReLU
      \\
      $4 \times 4$, stride=2 Conv 128, BN, LReLU
      \\ \hline
      $3 \times 3$, stride=1 Conv 256, BN, LReLU
      \\
      $4 \times 4$, stride=2 Conv 256, BN, LReLU
      \\ \hline
      $3 \times 3$, stride=1 Conv 512, BN, LReLU
      \\ \hline
      FC $\rightarrow$ 1 for $D$, FC $\rightarrow c$ for $C$
      \\ \bhline{1pt}
      \\ \bhline{1pt}
      (c) {\bf cGAN/rcGAN discriminator} $D({\bm x}, y)$
      \\ \bhline{0.75pt}
      RGB image ${\bm x} \in \mathbb{R}^{32 \times 32 \times 3}$,
      Concat($y$)
      \\ \hline
      $3 \times 3$, stride=1 Conv 64, BN, LReLU, Concat($y$)
      \\
      $4 \times 4$, stride=2 Conv 64, BN, LReLU, Concat($y$)
      \\ \hline
      $3 \times 3$, stride=1 Conv 128, BN, LReLU, Concat($y$)
      \\
      $4 \times 4$, stride=2 Conv 128, BN, LReLU, Concat($y$)
      \\ \hline
      $3 \times 3$, stride=1 Conv 256, BN, LReLU, Concat($y$)
      \\
      $4 \times 4$, stride=2 Conv 256, BN, LReLU, Concat($y$)
      \\ \hline
      $3 \times 3$, stride=1 Conv 512, BN, LReLU, Concat($y$)
      \\ \hline
      FC $\rightarrow$ 1
      \\ \bhline{1pt}
    \end{tabular}
  }
  \vspace{2mm}
  \caption{Standard CNN architectures for CIFAR-10 and CIFAR-100. The basic architectures are the same as those in~\cite{TMiyatoICLR2018b}. The slopes of all LReLU are set to 0.1. Following the AC-GAN paper~\cite{AOdenaICML2017}, in $D$ we adopted dropout (with a drop rate 0.5) after all convolutional layers.}
  \label{tab:net_cnn}
\end{table}

\subsubsection{WGAN-GP}
\label{subsubsec:wgangp_detail}

WGAN-GP~\cite{IGulrajaniNIPS2017} is one of the most widely-accepted models in the literature at present. It is an improved variant of WGAN~\cite{MArjovskyICML2017} and incorporates a gradient penalty (GP) term as an alternative to weight clipping. By using GP, it imposes a Lipschitz constraint on the discriminator (called the \textit{critic} in that work). This allows for stabilizing the training of a wide variety of GAN architectures without relying on heavy hyperparameter tuning. We defined the network architectures and training settings based on the source code provided by the authors of WGAN-GP.\footnote{\url{https://github.com/igul222/improved_wgan_training}}

\medskip\noindent\textbf{Network architectures.}
We used ResNet architectures provided in the WGAN-GP paper~\cite{IGulrajaniNIPS2017}. We describe their details in Table~\ref{tab:net_resnet}. As in DCGAN, the conditional generators used in AC-GAN/rAC-GAN and cGAN/rcGAN are the same (Table~\ref{tab:net_resnet}(a)), while the discriminators are different. For AC-GAN/rAC-GAN, we used $D\mbox{/}C$ in which the layers are shared between $D$ and $C$ except for the last layer, following~\cite{IGulrajaniNIPS2017} (Table~\ref{tab:net_resnet}(b)). For cGAN/rcGAN, we used the \textit{projection} discriminator~\cite{TMiyatoICLR2018} that incorporates the conditional information in a projection based manner (Table~\ref{tab:net_resnet}(c)).

\medskip\noindent\textbf{Training settings.}
In WGAN-GP, Wasserstein loss and GP are used as a GAN objective function. We set the trade-off parameter between them ($\lambda_{\rm GP}$) to 10. We trained the networks for $100k$ generator iterations using Adam with $\alpha = 0.0002$ (linearly decayed to 0 over $100k$ iterations), $\beta_{1} = 0$, $\beta_{2} = 0.9$, $n_{D} = 5$, and batch size of 64. In AC-GAN and rAC-GAN, we set the trade-off parameters $\lambda_{\rm AC}^r$ and $\lambda_{\rm AC}^g$ to 1 and 0.1, respectively.

\begin{table}[thb]
  \centering
  \scalebox{0.9}{
    \begin{tabular}{c}
      \bhline{1pt}
      (a) {\bf Conditional generator} $G({\bm z}, y)$
      \\ \bhline{0.75pt}
      ${\bm z} \in \mathbb{R}^{128} \sim {\cal N}(0, I)$
      \\ \hline
      FC $\rightarrow$ $4 \times 4 \times ch$
      \\ \hline
      ResBlock up $ch$
      \\ \hline
      ResBlock up $ch$
      \\ \hline
      ResBlock up $ch$
      \\ \hline
      BN, ReLU
      \\ \hline
      $3 \times 3$, stride=1 Conv 3, Tanh
      \\ \bhline{1pt}
      \\ \bhline{1pt}
      (b) {\bf AC-GAN/rAC-GAN discriminator} $D({\bm x})\mbox{/}C({\bm x})$
      \\ \bhline{0.75pt}
      RGB image ${\bm x} \in \mathbb{R}^{32 \times 32 \times 3}$
      \\ \hline
      ResBlock down 128
      \\ \hline
      ResBlock down 128
      \\ \hline
      ResBlock 128
      \\ \hline
      ResBlock 128
      \\ \hline
      ReLU
      \\ \hline
      Global pooling
      \\ \hline
      FC $\rightarrow$ 1 for $D$, FC $\rightarrow c$ for $C$
      \\ \bhline{1pt}
      \\ \bhline{1pt}
      (c) {\bf cGAN/rcGAN discriminator} $D({\bm x}, y)$
      \\ \bhline{0.75pt}
      RGB image ${\bm x} \in \mathbb{R}^{32 \times 32 \times 3}$
      \\ \hline
      ResBlock down 128
      \\ \hline
      ResBlock down 128
      \\ \hline
      ResBlock 128
      \\ \hline
      ResBlock 128
      \\ \hline
      ReLU
      \\ \hline
      Global pooling
      \\ \hline
      (FC $\rightarrow$ 1) + Proj(Embed($y$))
      \\ \bhline{1pt}
    \end{tabular}
  }
  \vspace{2mm}
  \caption{ResNet architectures for CIFAR-10 and CIFAR-100. The basic network architectures are the same as those in~\cite{IGulrajaniNIPS2017,XWeiICLR2018,TMiyatoICLR2018b}. In $G$'s ResBlock, conditional batch normalization~\cite{VDumoulinICLR2017b,HdVriesNIPS2017} was used to impose a conditional constraint on $G$. Following~\cite{IGulrajaniNIPS2017,XWeiICLR2018}, in WGAN-GP and CT-GAN, we set $ch = 128$ in $G$ and used global mean pooling in $D$. In CT-GAN, we applied dropout (with drop rates of 0.2, 0.5, and 0.5 from the upper block) after the second to fourth ResBlocks in $D$. Following~\cite{TMiyatoICLR2018b}, in SN-GAN, we set $ch = 256$ in $G$, used global sum pooling in $D$, and applied spectral normalization to all the layers in $D$.}
  \label{tab:net_resnet}
\end{table}

\begin{table*}[thb]
  \centering
  \scalebox{0.9}{
    \begin{tabular}{c|c|c|c|c|c}
      \bhline{1pt}
      GAN & Architecture & $G$ $ch$ & $D$ dropout & $D$ pooling & Objective function
      \\ \bhline{0.75pt}
      DCGAN & CNN & - & \ding{51} & - & Non-saturating loss
      \\ \hline
      WGAN-GP & ResNet & 128 & \ding{55} & Mean & Wasserstein loss + gradient penalty (GP)
      \\ \hline
      CT-GAN & ResNet & 128 & \ding{51} & Mean & Wasserstein loss + GP + consistency term (CT)
      \\ \hline
      SN-GAN & ResNet & 256 & \ding{55} & Sum & Hinge loss (with spectral normalization (SN))
      \\ \bhline{1pt}
    \end{tabular}
  }
  \vspace{2mm}
  \caption{Comparison of network architectures and training settings}
  \label{tab:net_diff}
\end{table*}

\subsubsection{CT-GAN}
\label{subsubsec:ctgan_detail}

At present, CT-GAN~\cite{XWeiICLR2018} is one of the state-of-the-art models. It is an improved variant of WGAN-GP and adds a consistency term (CT) to impose a Lipschitz constraint for the whole input domain. This contributes a further improvement in stabilizing the training and raises the quality of generated images. We reimplemented the model while referring to the source code provided by the authors of CT-GAN.\footnote{\url{https://github.com/biuyq/CT-GAN}}

\medskip\noindent\textbf{Network architectures.}
The network architectures of CT-GAN are the same as those of WGAN-GP except that dropout is used in $D$. We describe its detailed settings in the caption of Table~\ref{tab:net_resnet}.

\medskip\noindent\textbf{Training settings.}
The training settings of CT-GAN are the same as those of WGAN-GP except that CT is added in case CT-GAN. We set the trade-off parameter between the Wasserstein loss and CT ($\lambda_{\rm CT}$) to 2. The other settings (namely, $\alpha$, $\beta_{1}$, $\beta_{2}$, $n_{D}$, $\lambda_{\rm GP}$, batch size, and number of iterations as well as $\lambda_{\rm AC}^r$ and $\lambda_{\rm AC}^g$ in AC-GAN and rAC-GAN) are the same as those of WGAN-GP.

\subsubsection{SN-GAN}
\label{subsubsec:sngan_detail}

SN-GAN~\cite{TMiyatoICLR2018b} is also one of the state-of-the-art models at present. It introduces spectral normalization to impose a Lipschitz constraint. This helps stabilizing the discriminator training, and SN-GAN brings a breakthrough in image generation in complex settings (e.g., high-resolution image generation in ImageNet). We reimplemented the model based on the source code provided by the authors of SN-GAN.\footnote{\url{https://github.com/pfnet-research/sngan_projection}}

\medskip\noindent\textbf{Network architectures.}
The network architectures of SN-GAN are the same as those of WGAN-GP except that the feature maps are doubled in $G$, global sum pooling is used instead of global mean pooling in $D$, and spectral normalization is applied to all the layers in $D$. We describe the details in Table~\ref{tab:net_resnet}.

\medskip\noindent\textbf{Training settings.}
The training settings of SN-GAN are also nearly identical to those of WGAN-GP except that a hinge-loss~\cite{JHLimArXiv2017,DTRanArXiv2017} is used instead of Wasserstein loss and GP. The other settings including $\alpha$, $\beta_{1}$, $\beta_{2}$, $n_{D}$, batch size, and number of iterations as well as $\lambda_{\rm AC}^r$ and $\lambda_{\rm AC}^g$ in AC-GAN and rAC-GAN, are the same as those of WGAN-GP.

\subsubsection{Summary}

We summarize the difference in network architectures and training settings between DCGAN, WGAN-GP, CT-GAN, and SN-GAN in Table~\ref{tab:net_diff}.

\subsection{Evaluation metrics}
\label{subsec:cifar_eval_detail}

As discussed in Section~\ref{subsec:comp_eval}, we used four metrics for a comprehensive analysis: (1) the Fr\'{e}chet inception distance (FID), (2) Intra FID, (3) the GAN-test, and (4) the GAN-train. In this appendix, we describe the detailed procedure for calculating the scores.

\subsubsection{FID}
\label{subsubsec:fid_detail}

The FID~\cite{MHeuselNIPS2017} measures the 2-Wasserstein distance between $p^r$ and $p^g$, and is defined as
\begin{flalign}
  \label{eqn:fid}
  F(p^r, p^g) = & \, \| {\bm m}^r - {\bm m}^g \|^2_2
  \\ \nonumber
  + & \, {\rm Tr}({\bm C}^r + {\bm C}^g - 2({\bm C}^r {\bm C}^g)^{1/2}),
\end{flalign}
where $\{ {\bm m}^r, {\bm C}^r\}$ and $\{ {\bm m}^g, {\bm C}^g\}$ denote the mean and covariance of the final feature vectors of the Inception model calculated over real and generated samples, respectively. The authors show that the FID has correlation with human judgment and is more resilient to noise or mode-dropping than the Inception score~\cite{TSalimansNIPS2016} that is also commonly used in this field. We used the FID to assess the quality of an overall generative distribution. In the experiments, we computed the FID between the $50k$ samples generated by $G$ and all the samples in the training set. The implementation was based on the source code provided by the authors of FID.\footnote{\url{https://github.com/bioinf-jku/TTUR}}

Generally, in GANs, it is difficult to define the timing when to stop the training, partially because of the lack of an explicit likelihood measure. It is still an open issue, but as an approximate solution, we chose the best model (i.e., simulated early stopping) based on the FID, following~\cite{MLucicArXiv2017}. Precisely, we calculated the FID every $5k$ iterations and chose the best model in terms of the FID. We calculated the other scores (namely, Intra FID, the GAN-test, and the GAN-train) using this model and reported the results in this paper.

\subsubsection{Intra FID}
\label{subsubsec:intrafid_detail}

Intra FID~\cite{TMiyatoICLR2018} is a variant of the FID and calculates the FID for each class. The authors of Intra FID empirically observed that Intra FID had correlation with the diversity and visual quality in conditional image generation tasks. We used Intra FID to check the quality of a conditional generative distribution. In the experiments, we computed Intra FID between the $5k$ samples generated by $G$ and all the samples in the training set belonging to the class of concern. We reported the score averaged over the classes. We used this metric only for CIFAR-10 because in CIFAR-100, the number of clean labeled data for each class is insufficient to calculate Intra FID (which needs to be $\ge$2,048).

\subsubsection{GAN-test}
\label{subsubsec:gan_test_detail}

The GAN-test~\cite{KShmelkovECCV2018} is the accuracy of a classifier trained on real images and is evaluated on the generated images. This metric is developed for conditional generative models and approximates the precision (i.e., image quality) of them. As a classifier, we used PreAct ResNet-18 used in~\cite{HZhangICLR2018}, which is an 18-layer network with preactivation residual blocks~\cite{KHeECCV2016}. The implementation was based on the source code provided
by the authors of \cite{HZhangICLR2018}.\footnote{\url{https://github.com/facebookresearch/mixup-cifar10}} We used a cross-entropy loss as an objective function and trained 200 epochs with a batch size of 128. We set an initial learning rate to 0.1 and divided it by 10 after 100 and 150 epochs. Weight decay was set to 0.0001. The accuracy scores for the real test sets of CIFAR-10 and CIFAR-100 were 94.8\% and 75.9\%, respectively (which were the average scores over the last 10 epochs for three classifiers with random initializations). While calculating the GAN-test, we generated $50k$ samples for evaluation. We calculated the accuracy for them using the above three classifiers and reported their average scores.

\subsubsection{GAN-train}
\label{subsubsec:gan_trail_detail}

The GAN-train~\cite{KShmelkovECCV2018} is the accuracy of a classifier trained on generated images and evaluated on real images in a test set. This metric is also developed for conditional generative models and approximates the recall (i.e., diversity) of them. Regarding the classifier, we used the same network architecture and training settings as those used for the GAN-test (described in Appendix~\ref{subsubsec:gan_test_detail}). While training the classifier, we generated $50k$ samples as training samples. Using this classifier, we calculated the accuracy for the test set. We reported the scores averaged over the last 10 epochs.

\section{Details on Section~\ref{subsec:estT_eval}}
\label{subsec:estT_eval_detail}

In this appendix, we provide the details of the noise transition probability estimation method used in Section~\ref{subsec:estT_eval}. As discussed in Section~\ref{subsec:estT}, we used a \textit{robust two-stage training algorithm}~\cite{GPatriniCVPR2017}, which can estimate $T'$ independently of the main model (namely, an image classification model in \cite{GPatriniCVPR2017} and a conditional generative model in our case). In this algorithm, a noisy label classifier $C'(\tilde{y}|{\bm x})$ is first trained using noisy labeled data and then $T'$ is estimated via the following two steps:
\begin{flalign}
  \label{eqn:estimateT}
  \bar{\bm x}^i & = \argmax_{\bm x \in {\cal X}'} C'(\tilde{y} = i|{\bm x})
  \\
  \label{eqn:estimateT2}
  T_{i, j}' & = C'(\tilde{y} = j | \bar{\bm x}^i),
\end{flalign}
where ${\cal X}'$ is a dataset used for calculating $T'$. In practice, we used the training set as ${\cal X}'$. After estimating $T'$, the main model is trained using it.

While implementing the classifier, we used the network architecture and training settings that are similar to those used in calculating the GAN-test (see Appendix~\ref{subsubsec:gan_test_detail}). However, one possible problem encountered while solving Equations~\ref{eqn:estimateT} and \ref{eqn:estimateT2} using DNNs is the memorization effect~\cite{CZhangICLR2017}, i.e., $C'(\tilde{y}|{\bm x})$ can fit to noisy labels and make all probabilities to be zero or one. This causes difficulty in obtaining $T'$ having reasonable values. To alleviate the effect, we added an explicit regularization (i.e., added dropout with a drop rate 0.8 after the first convolutional layer in each residual block) to degrade training performance on noisy labeled data~\cite{DArpitICML2017}, conducted temperature scaling~\cite{CGuoICML2017} to mitigate the gap between accuracy and confidence, and took a $\alpha$-percentile in place of the $\argmax$ of Equation~\ref{eqn:estimateT}~\cite{AMenonICML2015,GPatriniCVPR2017} to eliminate the data strongly fitting the labels. Following~\cite{GPatriniCVPR2017}, we set $\alpha$ empirically. We used $\alpha = 97\%$ for the CIFAR-10 symmetric and asymmetric noise and set $\alpha = 100\%$ (i.e., $\argmax$ is directly used) and $\alpha = 99.7\%$ for the CIFAR-100 symmetric and asymmetric noise, respectively.

\section{Details on Section~\ref{subsec:imp_eval}}
\label{subsec:imp_eval_detail}

In this appendix, we present the implementation details of the improved technique for severely noise data, which is introduced in Section~\ref{subsec:imp} and is evaluated in Section~\ref{subsec:imp_eval}. Regarding the network architecture, we used shared networks between $D$ (or $D\mbox{/}C$ in AC-GAN/rAC-GAN) and $Q$. Following a sharing scheme between $D$ and $C$ in AC-GAN/rAC-GAN, we shared the layers between $D$ (or $D\mbox{/}C$) and $Q$ except for the last layer. The other parameters that we needed to define were the trade-off parameters between ${\cal L}_{\rm GAN}$ and ${\cal L}_{\rm MI}$, i.e., $\lambda_{\rm MI}^g$ and $\lambda_{\rm MI}^q$. We empirically defined the parameters dependent on the GAN configurations (i.e., DCGAN, WGAN-GP, CT-GAN, or SN-GAN) and model (i.e., rAC-GAN or rcGAN) but independent of the datasets (i.e., CIFAR-10 or CIFAR-100). We list them in Table~\ref{tab:imp_param}.

\begin{table}[thb]
  \centering
  \scalebox{0.9}{
    \begin{tabular}{c|c|c|c}
      \bhline{1pt}
      Model & GAN & $\lambda_{\rm MI}^q$ & $\lambda_{\rm MI}^g$
      \\ \bhline{0.75pt}
            & DCGAN & 0.01 & 0.04
      \\
      Improved & WGAN-GP & 1 & 0.02
      \\
      rAC-GAN & CT-GAN & 0.01 & 0.04
      \\
            & SN-GAN & 1 & 0.02
      \\ \hline
            & DCGAN & 1 & 0.04
      \\
      Improved & WGAN-GP & 1 & 0.04
      \\
      rcGAN & CT-GAN & 0.01 & 0.04
      \\
            & SN-GAN & 1 & 0.04
      \\ \bhline{1pt}
    \end{tabular}
  }
  \vspace{2mm}
  \caption{Hyperparameters for improved rAC-GAN and improved rcGAN}
  \label{tab:imp_param}
\end{table}

\section{Details on Section~\ref{subsec:clothing1m_eval}}
\label{sec:clothing1m_eval_detail}

\subsection{Network architectures and training settings}
\label{subsec:net_clothing1m}

In the experiments on Clothing1M (Section~\ref{subsec:clothing1m_eval}), we used two GAN configurations: CT-GAN for AC-GAN/rAC-GAN and SN-GAN for cGAN/rcGAN. We defined the network architectures and training settings while referring to the source code provided by the authors of SN-GAN~\cite{TMiyatoICLR2018b} (which is used for $64 \times 64$ dog and cat image generation).\footnote{\url{https://github.com/pfnet-research/sngan_projection}} The reason why we refer to this source code is that there is no previous study attempting to learn a generative model using Clothing1M, to the best of our knowledge. We experimentally confirm that its settings are reasonable for Clothing1M with no hyperparameter tuning.

\medskip\noindent\textbf{Network architectures.}
We describe the details on the network architectures in Table~\ref{tab:net_clothing1m}. They are basically similar to those in CIFAR-10 and CIFAR-100 (described in Appendix~\ref{subsec:cifar_net}) except that the input image size is different (that is $32 \times 32 \times 3$ in CIFAR-10 and CIFAR-100, while that is $64 \times 64 \times 3$ in Clothing1M) and feature map size is modified to adjust to the input size difference.

\medskip\noindent\textbf{Training settings.}
In CT-GAN, we set the trade-off parameters to $\lambda_{\rm GP} = 10$ and $\lambda_{\rm CT} = 2$, which are the same as those in CIFAR-10 and CIFAR-100. We trained the networks for $150k$ generator iterations using Adam with $\alpha = 0.0002$ (linearly decayed to 0 over the last $50k$ iterations), $\beta_{1} = 0$, $\beta_{2} = 0.9$, $n_{D} = 5$, and batch size of 64. We set the trade-off parameters $\lambda_{\rm AC}^r$ and $\lambda_{\rm AC}^g$ to 1 and 0.1, respectively. In SN-GAN, we used the same settings except that a GAN objective function is replaced from the Wasserstein loss + GP + CT to the hinge loss.

\begin{table}[thb]
  \centering
  \scalebox{0.9}{
    \begin{tabular}{c}
      \bhline{1pt}
      (a) {\bf Conditional generator} $G({\bm z}, y)$
      \\ \bhline{0.75pt}
      ${\bm z} \in \mathbb{R}^{128} \sim {\cal N}(0, I)$
      \\ \hline
      FC $\rightarrow$ $4 \times 4 \times 1024$
      \\ \hline
      ResBlock up $512$
      \\ \hline
      ResBlock up $256$
      \\ \hline
      ResBlock up $128$
      \\ \hline
      ResBlock up $64$
      \\ \hline
      BN, ReLU
      \\ \hline
      $3 \times 3$, stride=1 Conv 3, Tanh
      \\ \bhline{1pt}
      \\ \bhline{1pt}
      (b) {\bf AC-GAN/rAC-GAN discriminator} $D({\bm x})\mbox{/}C(y)$
      \\ \bhline{0.75pt}
      RGB image ${\bm x} \in \mathbb{R}^{64 \times 64 \times 3}$
      \\ \hline
      ResBlock down 64
      \\ \hline
      ResBlock down 128
      \\ \hline
      ResBlock down 256
      \\ \hline
      ResBlock down 512
      \\ \hline
      ResBlock down 1024
      \\ \hline
      ReLU
      \\ \hline
      Global mean pooling
      \\ \hline
      FC $\rightarrow$ 1 for $D$, FC $\rightarrow$ c for $C$
      \\ \bhline{1pt}
      \\ \bhline{1pt}
      (c) {\bf cGAN/rcGAN discriminator} $D({\bm x}, y)$
      \\ \bhline{0.75pt}
      RGB image ${\bm x} \in \mathbb{R}^{64 \times 64 \times 3}$
      \\ \hline
      ResBlock down 64
      \\ \hline
      ResBlock down 128
      \\ \hline
      ResBlock down 256
      \\ \hline
      ResBlock down 512
      \\ \hline
      ResBlock down 1024
      \\ \hline
      ReLU
      \\ \hline
      Global sum pooling
      \\ \hline
      (FC $\rightarrow$ 1) + Proj(Embed($y$))
      \\ \bhline{1pt}
    \end{tabular}
  }
  \vspace{2mm}
  \caption{ResNet architectures for Clothing1M. The basic network architectures are defined while referring to \cite{TMiyatoICLR2018b}. The detailed settings are similar to those described in Table~\ref{tab:net_resnet}. In $G$'s ResBlock conditional batch normalization~\cite{VDumoulinICLR2017b,HdVriesNIPS2017} was used to impose a conditional constraint on $G$. In CT-GAN, we used global mean pooling in $D$ and applied dropout (with drop rates of 0.2, 0.2, 0.5, and 0.5 from the upper block) after the second to fifth ResBlocks in $D$. In SN-GAN, we used global sum pooling in $D$ and applied spectral normalization to all the layers in $D$.
  }
  \label{tab:net_clothing1m}
\end{table}

\subsection{Evaluation metrics}
\label{subsec:clothing1m_eval_detail}

As discussed in Section~\ref{subsec:clothing1m_eval}, we used the FID and the GAN-train as evaluation metrics in these experiments. We did not use Intra FID because the number of clean labeled data for each class is insufficient to calculate Intra FID. We did not use the GAN-test because Clothing1M is a challenging dataset and we find that a trained classifier tends to be easily deceived by noisy labeled data.

The calculation procedure of the FID is the same as that for CIFAR-10 and CIFAR-100 (described in Appendix~\ref{subsubsec:fid_detail}). We calculated the FID between the $50k$ generated samples and all the samples in the training set (particularly we used $1M$ noisy data). The calculation procedure of the GAN-test is also similar to that for CIFAR-10 and CIFAR-100 (described in Appendix~\ref{subsubsec:gan_test_detail}). However, we modified the classifier network architecture and training settings so as to obtain the training stability when trained on real clean labeled data. Regarding the network architecture, we used the same network architecture as that for CIFAR-10 and CIFAR-100 except that dropout (with a drop rate 0.5) is used after the first convolutional layer in each residual block. With regards to the training settings, we used a cross-entropy loss as an objective function and trained 200 epochs with a batch size of 128. We set an initial learning rate to 0.01 and divided it by 10 after 100 and 150 epochs. Weight decay was set to 0.01. The accuracy for the real clean labeled test sets was 71.1\%.\footnote{This score cannot be directly compared with the scores in the previous studies because we used $64 \times 64$ images but the previous studies used $256 \times 256$ images.} While training the classifier, we generated $50k$ samples as training samples. Using this classifier, we calculated the accuracy for a test set. We reported the scores averaged over the last 10 epochs.

\end{document}